\documentclass[a4paper,10pt]{article}

\usepackage[margin=1in]{geometry}
\usepackage{microtype}
\usepackage{makecell}
\usepackage{booktabs}

\usepackage{amsmath,amssymb,amsfonts,mathrsfs}
\usepackage{amsthm}

\usepackage{graphicx}
\usepackage{subcaption}
\usepackage{booktabs}
\usepackage{threeparttable}
\usepackage{multirow}
\usepackage{tikz}

\usepackage{enumitem}
\usepackage[unicode,colorlinks,linkcolor=blue,citecolor=blue,urlcolor=blue]{hyperref}

\newcommand{\includegraphicsmaybe}[2][]{%
  \IfFileExists{#2}{\includegraphics[#1]{#2}}{%
    \fbox{\parbox{0.9\linewidth}{\centering \texttt{Missing file: #2}}}%
  }%
}

\newtheorem{theorem}{Theorem}[section]
\newtheorem{lemma}[theorem]{Lemma}

\title{\bf Randomized neural operator for parametric PDEs with fast training and conformal uncertainty quantification}
\author{
Zirui Deng$^{1}$ \quad Jingbo Sun$^{1}$ \quad Deyu Meng$^{1}$ \quad and \quad Fei Wang$^{1,*}$\\[2pt]
{\small $^{1}$School of Mathematics and Statistics, Xi'an Jiaotong University, Xi'an, Shaanxi 710049, China}\\
{\small $^\ast$Correspondence: \texttt{feiwang.xjtu@xjtu.edu.cn}}
}
\date{}

\begin{document}
\maketitle

\begin{abstract}

Repeatedly solving parametric PDEs is essential for uncertainty quantification, design optimization and inverse problems, but conventional neural operators require expensive non-convex training. We introduce PCA--RaNN, a randomized latent neural operator that combines PCA-based dimensionality reduction with fixed random features and a closed-form least-squares readout. It recasts latent operator learning as fixed-feature linear regression, reducing training time by one to three orders of magnitude across benchmarks while maintaining competitive accuracy. We introduce an energy-matched scaling rule and a lightweight two-parameter BFGS refinement to correct suboptimal feature scales. Ensemble averaging reduces predictive variance. On Burgers, Darcy, Navier--Stokes and backward heat equation benchmarks, PCA--RaNN provides a favorable speed--accuracy trade-off against operator-learning baselines. The ensemble supports split-conformal prediction intervals, and the linear readout enables rapid online adaptation via recursive least squares without retraining hidden features. This provides an efficient, uncertainty-aware surrogate for many-query scientific workflows.

\end{abstract}

\vspace{0.3em}
\noindent\textbf{Keywords:} operator learning; reduced-order modeling; randomized neural networks; uncertainty quantification; parametric partial differential equations.

\vspace{0.5em}

The modeling and simulation of physical systems governed by partial differential equations (PDEs) constitute a cornerstone of modern scientific computing, with applications spanning fluid dynamics, solid mechanics and electromagnetics~(\cite{Ferziger2002Computational, Zienkiewicz2005Finite, Jin2015Finite}). Many such problems are parametric, requiring repeated solutions for varying coefficients, source terms, initial conditions or boundary data, as arise in design optimization, uncertainty quantification and inverse problems. Traditional numerical methods, including finite element, finite difference and spectral methods, provide reliable approximations but can be computationally expensive, especially in high-resolution or many-query settings. Since each new parameter instance typically requires solving a large-scale discretized system, real-time prediction and extensive parameter exploration remain challenging, motivating the development of efficient surrogate models.

Deep learning has recently introduced a powerful paradigm for data-driven surrogate modeling. Neural networks, known for their universal approximation capabilities~(\cite{Hornik1989Multilayer}), have been extended from function approximation to the learning of mappings between infinite-dimensional function spaces, namely operators. Foundational theoretical results established that neural networks can approximate nonlinear operators~(\cite{Chen1995Universal,Chen1995Approximation}), leading to practical neural operator architectures such as Deep Operator Networks (DeepONets, \cite{Lu2021Learning}) and Fourier Neural Operators (FNOs, \cite{Li2020Fourier,Li2020Neural}). DeepONet uses a branch--trunk architecture to encode input functions and evaluate outputs at query locations, whereas FNO learns integral operators efficiently in the Fourier domain. 
Physics-informed variants further incorporate governing equation residuals into the training objective, reducing the demand for labeled data~(\cite{Li2024Physics,Wang2021Learning}). 
Recent reviews have further highlighted neural operators as a promising framework for accelerating scientific simulations and design, and have identified operator learning, reduced-order representations and machine-learning-assisted numerical algorithms as central directions in machine learning for PDEs~(\cite{Azizzadenesheli2024Neural,Brunton2024Promising}). 
Despite these advances, many neural-operator models still require training deep networks with many parameters through non-convex optimization, which can be computationally demanding and sensitive to initialization, hyperparameters and training protocols.

This computational burden has motivated the integration of operator learning with model reduction techniques, which exploit the low-dimensional structure often present in PDE solution manifolds. Classical reduced basis methods~(\cite{DeVore2017Theoretical}) can construct efficient surrogates but are usually intrusive, requiring access to the governing equations or weak forms. Data-driven non-intrusive alternatives have therefore attracted increasing attention. For example, PCA-based neural networks (PCA--NNs, \cite{Bhattacharya2021Model}) use principal component analysis (PCA) to compress input and output fields into low-dimensional latent coordinates and then learn the latent mapping using a fully connected neural network. Autoencoder-based latent neural operators, such as L-DeepONet~(\cite{Kontolati2024Learning}), similarly learn nonlinear embeddings and approximate the reduced operator in latent space, enabling real-time prediction of complex high-dimensional dynamics. Although these approaches demonstrate the effectiveness of latent operator learning, the latent-space regressor is still typically trained by gradient-based optimization and therefore remains subject to training cost and convergence issues.

Randomized neural networks (RaNNs) provide an alternative route to fast training~(\cite{Igelnik1995Stochastic,Igelnik1999Ensemble,Pao1994Learning}). In RaNNs, the weights and biases of the hidden layers are randomly sampled and fixed, so that the network output becomes a linear combination of random nonlinear features. Only the output-layer weights are trained, usually by solving a least-squares problem. This converts non-convex neural-network training into linear regression, reducing training time and avoiding difficulties such as local minima and vanishing gradients. RaNNs have shown strong performance in forward and inverse PDE problems~(\cite{Chen2022Bridging,Dong2021Local,Shang2023Randomized,Sun2024Local,Sun2024Diffusive,Li2025Local}) and have recently been incorporated into operator-learning frameworks~(\cite{Jiang2025DeepONet}). However, standard randomized architectures still require appropriate choices of random-feature scales, and single-hidden-layer constructions may lack the flexibility needed to represent complex multiscale operator maps. 
 Adaptation under shifted regimes is another important issue: transfer operator learning has been explored for PDE problems by adapting DeepONet-based models from source tasks to related target tasks under conditional shift~(\cite{Goswami2022Deep}). 
Taken together, these observations leave three practical gaps. First, latent operator learning still often relies on non-convex optimization even after dimensionality reduction. Second, randomized-feature methods are fast but remain sensitive to feature-scale selection, especially in latent operator-learning settings. Third, many-query scientific workflows require not only fast offline training, but also uncertainty estimates and rapid adaptation to shifted regimes.

To address these gaps, we introduce PCA--RaNN, a randomized latent operator-learning framework that combines PCA-based dimensionality reduction with a two-hidden-layer RaNN equipped with a hidden-layer skip connection. After projecting input and output fields into latent spaces, PCA--RaNN replaces the trainable nonlinear latent regressor used in PCA–NN-type reduced models by a fixed random feature map with a closed-form least-squares readout. 
This is the main distinction from conventional PCA-based neural surrogates: the latent coordinates are still obtained from data, but the latent solution map is fitted as a fixed-feature linear regression problem rather than trained by backpropagation. Rather than treating random-feature scales as manually tuned hyperparameters, we introduce a layer-wise energy-matching rule that provides a data-driven default initialization without a costly grid search. Because this energy-based criterion does not directly minimize prediction error, 
we further introduce a lightweight BFGS refinement over only two logarithmic scale parameters. For each fixed pair of scale parameters, the readout is still obtained by least squares, while the scale parameters are selected by validation. We also use ensemble averaging to reduce prediction variance. The ensemble structure enables uncertainty quantification through split conformal prediction~(\cite{Angelopoulos2021Gentle,Papadopoulos2002Inductive,Vovk2005Algorithmic}), yielding distribution-free pooled marginal prediction intervals over samples and grid points under the standard exchangeability assumption. In addition, the linear readout allows rapid online adaptation through recursive least-squares updates~(\cite{Haykin2008Adaptive,Sayed2003Fundamentals}), enabling the model to adapt to new data or shifted regimes without retraining the hidden features.

We evaluate PCA--RaNN on four operator-learning benchmarks: the viscous Burgers' equation, Darcy flow, the incompressible Navier--Stokes equations and the backward heat equation. Across these tasks, PCA--RaNN substantially reduces training time relative to operator-learning baselines while achieving competitive, and in several cases superior, accuracy. The BFGS-refined variant is particularly effective when the energy-matched scales are suboptimal, as observed in the backward heat equation benchmark. These results suggest that randomized latent neural operators can serve as lightweight, uncertainty-aware and rapidly adaptable surrogates for PDE-governed systems, especially in settings where training time, computational resources or repeated model updates are limiting factors.

\section*{Results}

\subsection*{PCA--RaNN achieves efficient operator learning via latent-space random features}

Learning solution operators for parametric PDEs directly in high-dimensional physical space is computationally prohibitive. PCA--RaNN circumvents this bottleneck by operating in a compressed latent space: input and output fields are encoded by PCA into low-dimensional coordinates $x$ and $y$, and a two‑hidden‑layer randomized neural network with a skip connection (with fixed hidden weights) maps
\[
    \widehat{y} = \Phi(x) \approx y, \qquad x=F_{\mathcal{A}}(a),\quad y=F_{\mathcal{U}}(u).
\]
Crucially, only the final readout weights are fitted by least squares, making the latent operator learning a linear regression problem. This is the main distinction from PCA--NN-type reduced models: the latent coordinates are data-driven, but the nonlinear latent regressor is replaced by a fixed random feature map with a closed-form readout. The physical prediction is reconstructed as
\[
    \widehat{u} = G_{\mathcal{U}}\left(\Phi(F_{\mathcal{A}}(a))\right).
\]
Figure~\ref{fig:workflow} illustrates the overall workflow and the latent regressor architecture.

We study two variants of the same randomized latent operator. The first, the energy-matched variant, determines the feature scales $(\bar r_1,\bar r_2)$ by the layer-wise energy-matching rule described in Methods. This requires only PCA, random-feature construction, and a single least-squares solve. The second, the BFGS-refined variant, starts from the same energy-matched scales and refines them by minimizing a validation loss over the two logarithmic scale parameters. This refinement involves only a two-dimensional validation optimization; for each fixed pair of scales, the output-layer fit remains a closed-form least-squares or ridge-regression solve.

\begin{figure}[t]
    \centering
        \includegraphics[width=\linewidth]{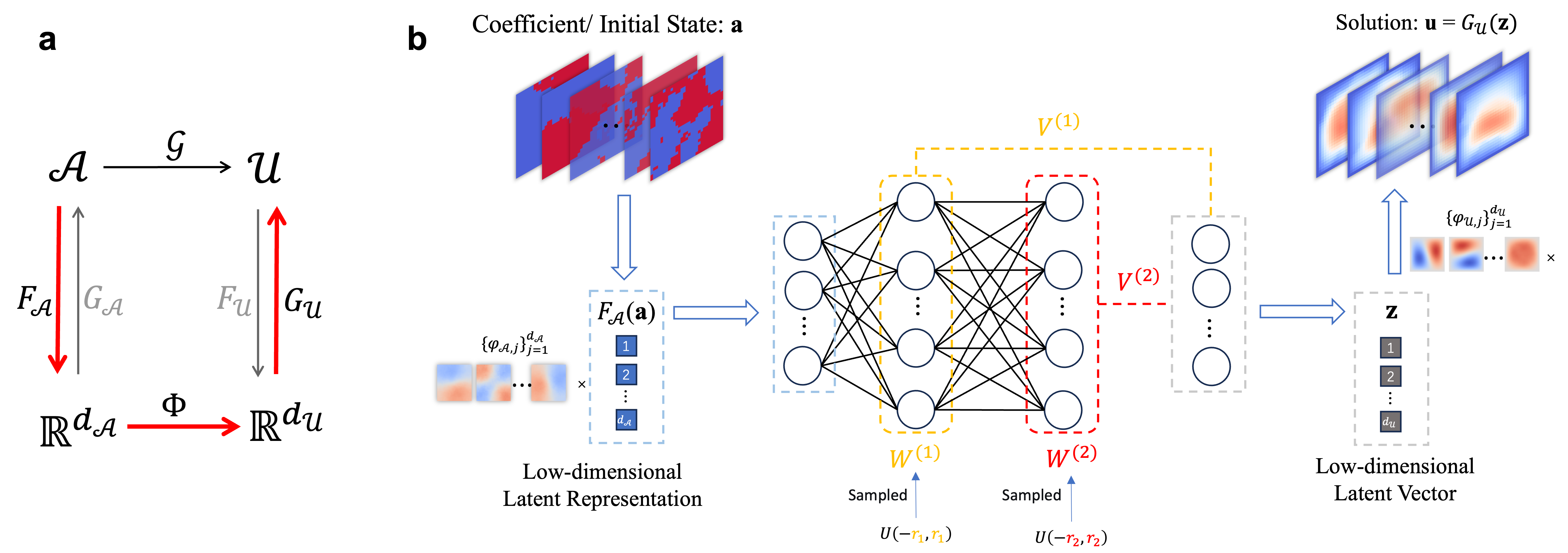}
    \caption{\textbf{Overview of PCA--RaNN.}
   \textbf{a,}  Latent-space decomposition of the parametric PDE solution operator.
    The input field \(a\in\mathcal A\) is encoded into a low-dimensional PCA coordinate
    \(x=F_{\mathcal A}(a)\), mapped by a latent operator \(\Phi\), and decoded to the
    physical prediction \(\hat u=G_{\mathcal U}(\Phi(x))\).
   \textbf{b,}  Architecture of the randomized latent regressor.
    The hidden weights \(W^{(1)}\) and \(W^{(2)}\) are randomly sampled and fixed,
    with feature scales \(r_1\) and \(r_2\) selected by energy matching and optionally
    refined by BFGS. The readout weights \(V^{(1)}\) and \(V^{(2)}\) are fitted by
    least squares, with a skip connection from the first hidden layer to the output.}
    \label{fig:workflow}
\end{figure}


\subsection*{BFGS refinement corrects suboptimal energy-matched scales}

The scales $r_1$ and $r_2$ control the magnitude of the random projections before the nonlinear activation. Rather than treating these scales as manually tuned hyperparameters, PCA--RaNN first matches the expected hidden-feature energy to the output latent energy. The energy-matching rule provides data-driven initial values without a grid search, but it is based on a small-signal approximation and therefore does not directly minimize the validation error. To improve upon this initialization, we introduce two logarithmic scale-offset parameters, $\eta_1$ and $\eta_2$, which multiplicatively adjust the energy-matched scales $\bar r_1$ and $\bar r_2$ as
\[
    r_1(\eta_1)=\bar r_1\exp(\eta_1),
    \qquad
    r_2(\eta_2)=\bar r_2\exp(\eta_2).
\]
We then select \(\eta=(\eta_1,\eta_2)^\top\) by approximately minimizing a held-out validation loss with BFGS.

Table~\ref{tab:scale_refinement} reports the energy-matched scales, the refined scales, and the number of BFGS iterations. Across the benchmark problems, BFGS either adjusts the energy-matched scales when the validation loss can be reduced or leaves them unchanged when the initial scales already satisfy the stopping criterion. The improvement is most pronounced for the backward heat equation benchmark, where the refined ensemble model reduces the relative test error from $2.89\times 10^{-1}$ to $5.59\times 10^{-2}$. The refinement also improves Burgers' equation and Navier--Stokes, reducing the errors from $2.28\times 10^{-3}$ to $1.93\times 10^{-3}$ and from $2.59\times 10^{-2}$ to $2.47\times 10^{-2}$, respectively. For Darcy flow, BFGS terminates at the initial point and does not identify a validation-loss decrease from the energy-matched initialization; consequently, the refined scales coincide with the initial ones. Together, these results indicate that energy matching provides a useful default initialization and that a two-dimensional refinement can recover additional accuracy when the initial scales are suboptimal.

\begin{table*}[t]
\centering
\caption{\textbf{Energy-matched and BFGS-refined scales with test errors.}
The energy-matched scales $(\bar r_1,\bar r_2)$ are computed from the training data. The BFGS-refined scales are $r_l^\star=\bar r_l\exp(\eta_l^\star)$. Error (EM) and Error (BFGS) denote test relative $L^2$ errors of the ensemble model. Improvement is the percentage reduction of Error (BFGS) relative to Error (EM), computed from the unrounded errors before formatting. For Darcy flow, BFGS stops at the initial scales; the small EM--BFGS error difference arises from the PCA basis used in the BFGS validation protocol rather than from scale refinement.}
\label{tab:scale_refinement}
\resizebox{\textwidth}{!}{ 
\begin{tabular}{lccccccc}
\toprule
Problem & $(\bar r_1, \bar r_2)$ & $(r_1^\star, r_2^\star)$ & $\eta^\star$ & BFGS iters & Error (EM) & Error (BFGS) & Improvement \\
\midrule
Burgers & (0.45, 0.10) & (0.54, 0.10) & (0.18, 0.00) & 17 & $2.28\times 10^{-3}$ & $1.93\times 10^{-3}$ & 15.58\% \\
Darcy & (0.00046, 0.03) & (0.00046, 0.03) & (0.00, 0.00) & 0 & $2.95\times 10^{-2}$ & $2.94\times 10^{-2}$ & 0.21\% \\
Navier--Stokes & (0.024, 0.061) & (0.024, 0.055) & (0.01, -0.11) & 9 & $2.59\times 10^{-2}$ & $2.47\times 10^{-2}$ & 4.51\% \\
Backward heat eqn. & (0.48, 0.24) & (0.0083, 0.34) & (-4.04, 0.34) & 5 & $2.89\times 10^{-1}$ & $5.59\times 10^{-2}$ & 79.46\% \\
\bottomrule
\end{tabular}
}%
\end{table*}


\subsection*{Benchmark performance and training efficiency}

We evaluate PCA--RaNN on four operator-learning tasks: Burgers' equation (initial condition $\to$ full spatio-temporal solution), Darcy flow (permeability field $\to$ pressure field), the two-dimensional Navier--Stokes equation in vorticity form (10 earlier vorticity snapshots $\to$ 10 later ones), and the backward heat equation (final temperature field $\to$ previous 20 time steps). Detailed problem descriptions, dataset generation and model configurations are provided in Methods.

BFGS refinement improves accuracy to different extents across the benchmarks, at the cost of additional validation-loss evaluations over the two scale parameters. We therefore report both ensemble variants ($K=20$) to show the accuracy--training cost trade-off. Table~\ref{tab:benchmark} compares PCA--RaNN with three representative operator-learning baselines, PCA--NN~(\cite{Bhattacharya2021Model}), DeepONet~(\cite{Lu2021Learning,Lu2022Comprehensive}) and FNO~(\cite{Li2020Fourier}). The architectures and training budgets of the reimplemented baselines are summarized in Supplementary Table~\ref{tab:baseline_config}. All training times are measured on a single NVIDIA RTX A6000 GPU.

On Burgers' equation, PCA--RaNN is both faster and more accurate than the reimplemented baselines. 
The BFGS-refined ensemble trains in \(5.5\,\mathrm{s}\) and achieves a relative error of \(1.93\times10^{-3}\), compared with \(2.28\times10^{-2}\), \(1.11\times10^{-2}\) and \(9.87\times10^{-3}\) for DeepONet, FNO and PCA--NN, respectively. 
On Darcy flow, FNO gives the lowest error, whereas PCA--RaNN remains competitive and is substantially faster: the BFGS-refined ensemble attains \(2.94\times10^{-2}\) in \(14.6\,\mathrm{s}\), compared with \(2.43\times10^{-2}\) in \(1258.6\,\mathrm{s}\) for FNO.

For Navier--Stokes, FNO attains the lowest final-time error ($2.03\times10^{-2}$), followed by DeepONet ($2.23\times10^{-2}$) and the BFGS-refined PCA--RaNN ensemble ($2.47\times10^{-2}$). PCA--RaNN is less accurate than FNO and DeepONet on this benchmark, but trains in $88.7\,\mathrm{s}$, compared with $1150.3\,\mathrm{s}$ for FNO, $2000.7\,\mathrm{s}$ for DeepONet and $1468.2\,\mathrm{s}$ for PCA--NN.

For the backward heat equation benchmark, the EM model is less accurate than PCA--NN ($2.89\times10^{-1}$ vs. $1.17\times10^{-1}$). After BFGS refinement, however, the error drops to $5.59\times10^{-2}$, outperforming DeepONet, FNO and PCA--NN while reducing training time from $2221.7\,\mathrm{s}$ for FNO and $1507.1\,\mathrm{s}$ for PCA--NN to $9.8\,\mathrm{s}$. This sensitivity is expected because the backward heat equation is ill-posed, so changes in the random-feature scale parameters can strongly affect the recovery of earlier-time components. Overall, the results indicate that PCA--RaNN provides a favorable speed--accuracy trade-off: it is not always the most accurate model, but it consistently reduces training time by one to three orders of magnitude.

Ensemble averaging reduces the test error by 10--30\% relative to a single PCA--RaNN member, with gains exceeding 50\% for Burgers' equation (Extended Data Table~\ref{tab:ensemble_vs_single}). The training cost scales approximately linearly with the ensemble size because each additional member only requires random-feature construction and a least-squares readout fit. Representative predictions for each benchmark are shown in Extended Data Figs.~\ref{fig:extended_Burgers}--\ref{fig:extended_Heat}. 
Taken together, these results suggest that PCA--RaNN is most attractive when retraining cost, repeated model updates or limited computational budgets are the dominant bottlenecks, whereas fully trained neural operators may remain preferable when the main objective is to minimize the prediction error on a fixed training distribution.

\begin{table*}[t]
\centering
\caption{\textbf{Performance comparison on PDE benchmarks.}
Training time is wall-clock time measured on a single NVIDIA RTX A6000 GPU. For PCA--RaNN, both the energy-matched (EM) and BFGS-refined ensembles ($K=20$) are reported. DeepONet, FNO and PCA--NN are reimplemented and evaluated under the same benchmark settings. For Navier--Stokes, all methods predict the sequence $t=11$--$20$, and the reported error is the mean relative $L^2$ error at the final predicted time $t=20$.}
\label{tab:benchmark}
\begin{tabular}{@{}llcc@{}}
\toprule
Problem & Method & Training time (s) & Rel. $L^2$ error \\
\midrule
\multirow{5}{*}{Burgers}
 & DeepONet & 32330.9 & $2.28\times10^{-2}$ \\
 & FNO & 13468.7 & $1.11\times10^{-2}$\\
 & PCA--NN & 2878.8 & $9.87\times10^{-3}$ \\
 & PCA--RaNN-EM (ensemble) & 2.3 & $2.28\times10^{-3}$\\
 & PCA--RaNN-BFGS (ensemble) & 5.5 &  $1.93\times10^{-3}$\\
\midrule
\multirow{5}{*}{Darcy}
 & DeepONet & 1740.1 & $3.01\times10^{-2}$ \\
 & FNO & 1258.6 & $2.43\times10^{-2}$\\
 & PCA--NN & 1546.3 & $3.35\times10^{-2}$ \\
 & PCA--RaNN-EM (ensemble) & 10.6 & $2.95\times10^{-2}$ \\
 & PCA--RaNN-BFGS (ensemble) & 14.6 & $2.94\times10^{-2}$ \\
\midrule
\multirow{5}{*}{Navier--Stokes}
 & DeepONet & 2000.7 & $2.23\times10^{-2}$ \\
 & FNO & 1150.3 & $2.03\times10^{-2}$\\
 & PCA--NN & 1468.2 & $3.34\times10^{-2}$ \\
 & PCA--RaNN-EM (ensemble) & 19.4 & $2.59\times10^{-2}$ \\
 & PCA--RaNN-BFGS (ensemble) & 88.7 & $2.47\times10^{-2}$ \\
\midrule
\multirow{5}{*}{Backward heat eqn.}
 & DeepONet & 2736.9 & $2.53\times10^{-1}$ \\
 & FNO & 2221.7 & $3.21\times10^{-1}$\\
 & PCA--NN & 1507.1 & $1.17\times 10^{-1}$ \\
 & PCA--RaNN-EM (ensemble) & 4.6 & $2.89\times 10^{-1}$ \\
 & PCA--RaNN-BFGS (ensemble) & 9.8 & $5.59\times10^{-2}$ \\
\bottomrule
\end{tabular}
\end{table*}


\subsection*{Uncertainty quantification with conformal prediction}

We equip the ensemble PCA--RaNN with prediction intervals using split conformal prediction (Methods). For each benchmark, a calibration set of size $N_{\rm cal}=300$ is used to compute nonconformity scores and the conformal quantile corresponding to a nominal coverage level $1-\alpha=0.95$. The calibration set is disjoint from the training, BFGS-validation and test sets, and is not used for model fitting or selection; the predictor is frozen before calibration. The resulting prediction intervals are then evaluated on independent test sets. Since the uncertainty procedure depends only on the ensemble predictions and not on how the scales were selected, we report it for the BFGS-refined ensemble model; the same procedure applies directly to the energy-matched ensemble.

Figure~\ref{fig:UQ_results}\subref{fig:UQ_results:a} presents the reliability diagrams for all four benchmarks: Burgers' equation, Darcy flow, Navier--Stokes and the backward heat equation. For each problem, the empirical coverage curve, computed by pooling across test samples and grid points, closely follows the diagonal across the tested confidence levels, consistent with the finite-sample pooled marginal coverage guarantee of split conformal prediction under exchangeability. At the nominal 95\% level, the empirical coverages are $94.71\%$, $95.52\%$, $95.50\%$ and $94.69\%$ for Burgers, Darcy, Navier–Stokes and the backward heat equation, respectively. The corresponding grid-averaged interval widths are $1.29\times 10^{-3}$, $8.87\times 10^{-4}$, $6.58\times 10^{-2}$ and $1.92\times 10^{-2}$, respectively.

A representative Darcy test sample (the one with median prediction error) is examined in detail in Fig.~\ref{fig:UQ_results}\subref{fig:UQ_results:b}–\subref{fig:UQ_results:e}. The true pressure field (Fig.~\ref{fig:UQ_results}\subref{fig:UQ_results:b}) exhibits sharp transitions induced by the heterogeneous permeability field. The ensemble mean prediction (Fig.~\ref{fig:UQ_results}\subref{fig:UQ_results:c}) captures the overall structure accurately. A cross-sectional slice at $y=0.5$ (Fig.~\ref{fig:UQ_results}\subref{fig:UQ_results:d}) shows that the 95\% prediction interval contains the true profile at all sampled points along this slice. This is an empirical observation for a single test sample and does not imply simultaneous coverage. The predictive standard deviation field (Fig.~\ref{fig:UQ_results}\subref{fig:UQ_results:e}) highlights regions of elevated variability that coincide with high-contrast permeability features, suggesting spatially varying predictive reliability. Together with the conformal calibration, this yields prediction intervals with near-nominal empirical coverage and spatial patterns that are interpretable in relation to the underlying permeability field.

\begin{figure}[t]
    \centering
    \includegraphics[width=\textwidth]{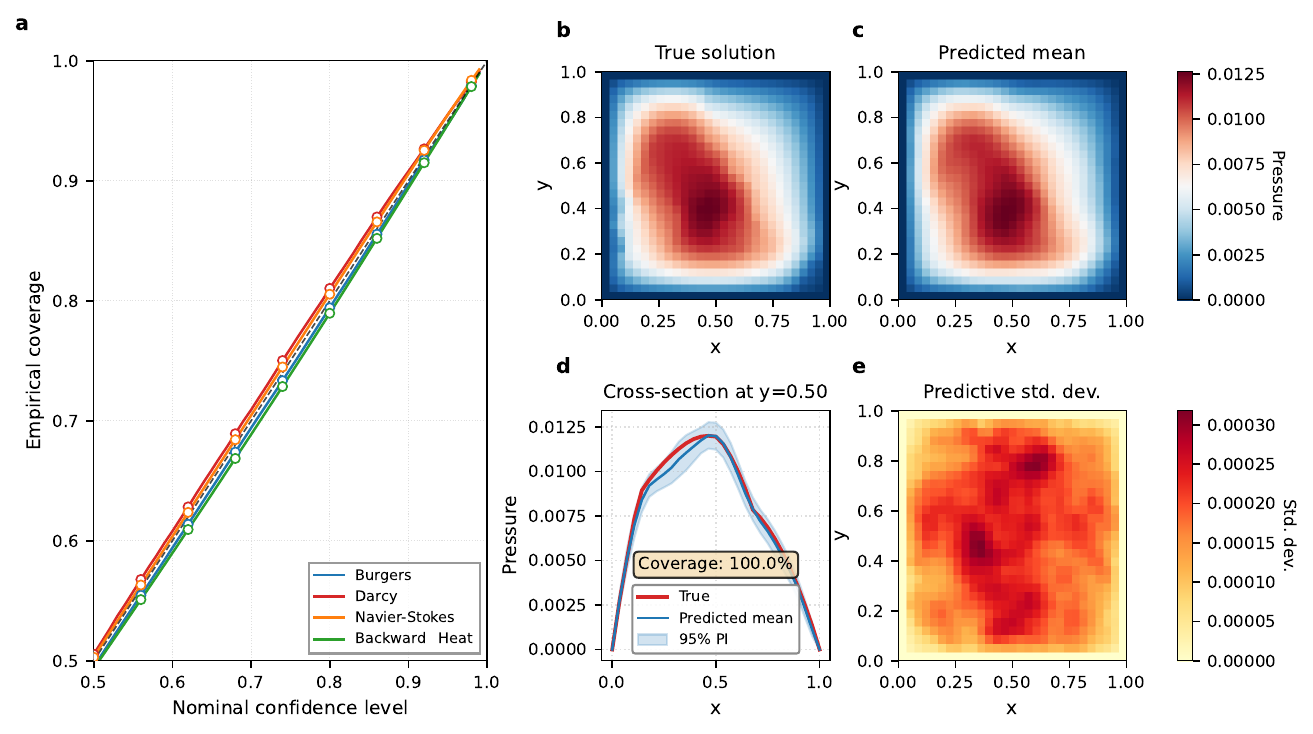}
    \caption{\textbf{Uncertainty quantification for ensemble PCA--RaNN across four PDE problems.}
    \textbf{a}, Reliability diagram showing empirical coverage versus nominal confidence level for Burgers (blue), Darcy (red), Navier–Stokes (orange), and the backward heat equation (green). The dashed diagonal indicates perfect calibration. 
    \textbf{b}--\textbf{e}, Detailed analysis of a representative Darcy test sample (median prediction error). \textbf{b}, True pressure solution. \textbf{c}, Ensemble mean prediction. \textbf{d}, Cross-section at $y=0.5$ comparing true (red) and predicted (blue) pressures; the shaded region denotes the 95\% prediction interval. \textbf{e}, Predictive standard deviation, with warmer colors indicating higher uncertainty.}
    \label{fig:UQ_results}
    \phantomsubcaption\label{fig:UQ_results:a}
    \phantomsubcaption\label{fig:UQ_results:b}
    \phantomsubcaption\label{fig:UQ_results:c}
    \phantomsubcaption\label{fig:UQ_results:d}
    \phantomsubcaption\label{fig:UQ_results:e}
\end{figure}


\subsection*{Rapid online adaptation via recursive least squares}

Finally, we demonstrate rapid online adaptation enabled by the linear readout. Once the PCA representation, randomized hidden features, and BFGS-refined scale parameters are fixed, the model can adapt to new data by updating only the linear readout. This adaptation mechanism does not require retraining the hidden layers. The adaptation samples are drawn from a later time window and are disjoint from the test set; all reported errors are evaluated on a separate held-out test set from that same window.

We test this capability on the long-horizon Navier--Stokes task. The model is first trained on 1000 samples mapping vorticity snapshots from $t=1$--$10$ to $t=11$--$20$. It is then adapted using a limited number of new samples from a later time window, mapping $t=11$--$20$ to $t=21$--$30$, with the adaptation set being much smaller than the original training set. During adaptation, the PCA basis, randomized hidden layers, anchor values, and BFGS-refined scale parameters are kept fixed; only the readout weights are updated via recursive least squares (RLS). As a fixed-feature baseline, we compare RLS with batch readout refitting, where the linear readout is recomputed by solving a least-squares problem on the augmented dataset (combining old and new samples) while keeping the PCA basis and hidden features fixed. In contrast, RLS incrementally updates the readout and inverse covariance from the original training data using only the new adaptation samples, without revisiting the original feature matrix.

Figure~\ref{fig:rls}\subref{fig:rls:a} shows the relative $L^2$ error at the farthest prediction time ($t=30$) as a function of the number of adaptation samples. The original model without adaptation (dashed horizontal line) yields an error of $0.290$. Using $200$ adaptation samples, the error drops to $0.079$, corresponding to a $72.8\%$ reduction relative to the unadapted model. A similar improvement is observed for the average error over $t=21$--$30$, which decreases by $78.3\%$ (from $0.228$ to $0.049$).

Figure~\ref{fig:rls}\subref{fig:rls:b} compares the online adaptation time. The reported time includes feature evaluation on new samples and the update or refit of the linear readout. The hidden features of the original training samples are cached after the initial training stage and are reused by the batch readout refit. For $200$ samples, RLS completes the update in $0.60$ seconds, whereas batch readout refitting takes $4.5$ seconds, corresponding to a $7.5\times$ speedup.

These results highlight a practical advantage of PCA--RaNN: once the randomized latent feature map has been constructed, the model can be rapidly adapted to new data or shifted temporal regimes by updating only the linear readout. This is particularly useful in many-query simulations where new samples arrive sequentially or the operating
regime shifts, because the model can be updated without reconstructing PCA bases or retraining hidden
features.

\begin{figure}[t]
    \centering
    \includegraphics[width=\linewidth]{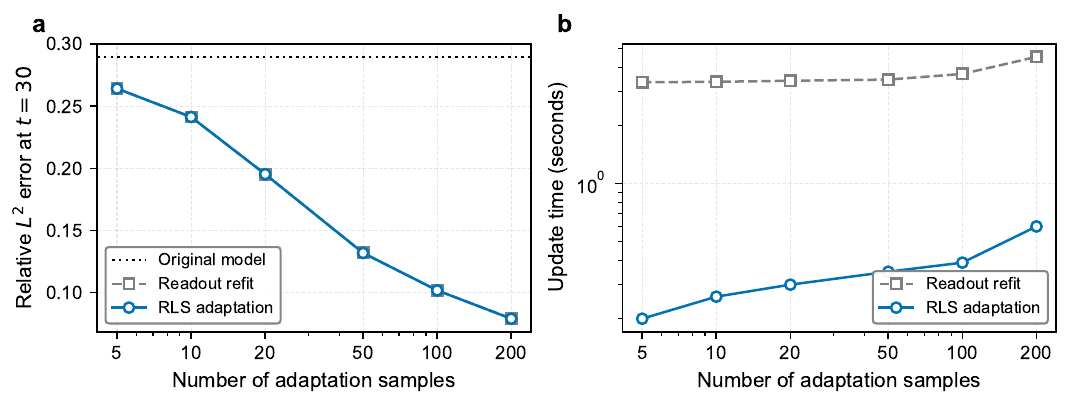}
    \caption{
    \textbf{Rapid online adaptation for Navier--Stokes.}
    \textbf{a,} Relative $L^2$ error at the farthest prediction time ($t=30$) versus the number of adaptation samples. The dashed horizontal line denotes the error of the original model without adaptation.
    \textbf{b,} Online adaptation time of RLS compared with batch readout refitting. Both methods use the same fixed PCA--RaNN features; batch readout refitting recomputes the linear readout by solving an augmented least-squares problem.
    }
    \label{fig:rls}
    \phantomsubcaption\label{fig:rls:a}
    \phantomsubcaption\label{fig:rls:b}
\end{figure}


\section*{Discussion}

We have introduced PCA--RaNN, a randomized latent neural operator that turns reduced operator learning into a fixed-feature linear regression problem. The main idea is to separate field compression, latent-space regression and physical reconstruction. PCA compresses the input and output fields into low-dimensional coordinates, a two-hidden-layer randomized neural network with a skip connection defines a nonlinear latent feature space, and only the final readout is fitted by least squares. Compared with PCA--NN-type reduced models, the key difference is that the latent map is not represented by a fully trained neural network, but by a randomized feature model with a closed-form readout for fixed feature scales. This design avoids backpropagation through all network parameters and reduces training time by approximately one to three orders of magnitude relative to the reimplemented baselines in the tested benchmarks, with the largest gains observed on Burgers' equation and the backward heat equation.

A central component of the method is scale selection for the random features. Randomized-feature models are computationally attractive, but their accuracy can be sensitive to the magnitude of the random projections before activation. The proposed layer-wise energy-matching rule provides a data-driven initialization by matching the expected hidden-feature energy to the output latent energy, thereby avoiding a costly grid search over random-feature scales. Because this rule is based on an approximate small-signal argument, the resulting scales can still be suboptimal. The two-parameter BFGS refinement addresses this issue by optimizing only two logarithmic scale parameters on a validation loss, while the readout remains a least-squares or ridge-regression solve for each fixed scale pair. This refinement is especially important for the backward heat equation benchmark, where the ill-posed inverse map makes the recovery more sensitive to feature-scale selection. For Darcy flow and Navier--Stokes, the smaller gains indicate that the energy-matched initialization is already close to adequate.

The linear-readout structure also makes PCA--RaNN useful beyond fast offline training. Ensemble averaging reduces the variance induced by random features, and split conformal prediction calibrates the ensemble outputs into prediction intervals with finite-sample pooled marginal coverage over sample--grid-point pairs under exchangeability. Empirically, the calibrated intervals achieve near-nominal coverage across the tested benchmarks, while the ensemble standard deviations reveal input-dependent spatial uncertainty patterns. The same linear structure also enables rapid online adaptation: once the PCA bases, hidden features and scale parameters are fixed, new data can be incorporated by recursive least-squares updates of the readout. The Navier--Stokes long-horizon experiment demonstrates this advantage, showing a substantial error reduction under a shifted temporal regime without retraining the hidden features.

Several limitations should be noted. First, the PCA encoder and decoder are tied to the training discretization, so the current implementation does not provide mesh-independent evaluation or continuous interpolation like coordinate-based decoders. This limitation is particularly relevant for geometry-dependent PDE problems, where diffeomorphic mapping operator learning has recently been developed to learn solution operators over varying geometries~(\cite{Yin2024Scalable}). Second, physics-informed training is nontrivial because the PDE residual is not directly represented in the latent coordinates. Third, while computationally efficient, the randomized feature map can be less accurate than fully trained neural operators on highly nonlinear or long-horizon turbulent dynamics, as seen in the Navier--Stokes benchmark. Finally, the approximation result in Supplementary Note~1 establishes high-probability universality for a single-hidden-layer RaNN and therefore supports the randomized-feature principle, but it does not yet characterize the two-hidden-layer skip architecture, PCA-compressed operator learning or BFGS-refined scale selection used here.

These limitations also highlight that PCA--RaNN should be viewed as a modular latent operator-learning framework rather than a fixed architecture. The present implementation uses PCA for compression, a two-hidden-layer RaNN for latent regression, energy matching and BFGS refinement for scale selection, ensemble conformal prediction for uncertainty calibration, and recursive least squares for online adaptation. Each component can be replaced or strengthened. The PCA encoder and decoder may be generalized to nonlinear, geometry-aware or physics-informed latent representations; the randomized latent regressor may be enriched with multiscale, local, structured or graph-based random features; and the scale-selection and calibration modules may be made adaptive to residual information or distribution shifts. 
Another complementary direction is to couple neural operators more tightly with classical numerical methods; for example, hybrid neural–numerical methods have blended DeepONet with relaxation schemes to balance convergence across eigenmodes of the iterative PDE solver~(\cite{Zhang2024Blending}). 
Provided that the latent representation and hidden features are fixed before readout fitting, the regression step can still retain the main computational advantage of PCA--RaNN, namely fast least-squares training and efficient online adaptation.

Overall, PCA--RaNN provides a lightweight and flexible foundation for fast, uncertainty-aware and adaptable neural operators for many-query PDE problems. It is not intended to replace fully trained neural operators in all regimes; rather, it offers a complementary route in settings where rapid training, calibrated uncertainty and efficient redeployment under limited computational budgets are essential.

\section*{Methods}

\subsection*{Operator learning formulation}

Let $\mathcal{A}$ and $\mathcal{U}$ be Hilbert spaces representing the input and output function spaces of a parametric partial differential equation. We denote the solution operator by
\[
    \mathcal{G}:\mathcal{A}\rightarrow \mathcal{U},\qquad u=\mathcal{G}(a),
\]
where $a\in\mathcal{A}$ is an input function, such as an initial condition or a coefficient field, and $u\in\mathcal{U}$ is the corresponding solution. Given training data $\{(a_i,u_i)\}_{i=1}^{N}$, our goal is to construct a surrogate $\mathcal{G}_{\theta}$ that accurately predicts $u$ for unseen inputs $a$.

In all experiments, prediction accuracy is measured by the mean relative $L^2$ error
\[
    {\rm Err}
    =
    \frac{1}{N_{\rm test}}
    \sum_{i=1}^{N_{\rm test}}
    \frac{\|\widehat u_i-u_i\|_2}{\|u_i\|_2}.
\]


\subsection*{PCA--RaNN framework}

PCA--RaNN approximates the solution operator through a latent-space decomposition
\[
    \mathcal{G}_{\theta}
    =
    G_{\mathcal{U}}\circ \Phi_{\theta}\circ F_{\mathcal{A}},
\]
where $F_{\mathcal A}:\mathcal A\to\mathbb R^{d_{\mathcal A}}$ maps the input function to a low-dimensional latent vector, $G_{\mathcal U}:\mathbb R^{d_{\mathcal U}}\to\mathcal U$ reconstructs the output field from its latent coordinates, and $\Phi_\theta:\mathbb R^{d_{\mathcal A}}\to\mathbb R^{d_{\mathcal U}}$ is the randomized latent map.

The input and output encoders are obtained by centered principal component analysis. For the input data, let $\bar a$ be the sample mean and let $\{\varphi_{\mathcal{A},j}\}_{j=1}^{d_{\mathcal{A}}}$ be the leading PCA modes. The input encoder is
\[
    F_{\mathcal{A}}(a)
    =
    \left(
    \langle a-\bar a,\varphi_{\mathcal{A},1}\rangle,\ldots,
    \langle a-\bar a,\varphi_{\mathcal{A},d_{\mathcal{A}}}\rangle
    \right)^{\top}.
\]
The output encoder $F_{\mathcal{U}}$ and decoder $G_{\mathcal{U}}$ are defined analogously from the output mean and output PCA modes. Thus the original training data are converted into latent pairs
\[
    x_i=F_{\mathcal{A}}(a_i)\in\mathbb{R}^{d_{\mathcal{A}}},
    \qquad
    y_i=F_{\mathcal{U}}(u_i)\in\mathbb{R}^{d_{\mathcal{U}}}.
\]


\subsection*{Randomized latent regressor}

The latent map $y\approx \Phi_{\theta}(x)$ is approximated by a two-hidden-layer randomized neural network with a skip connection. For a latent input $x$, we define
\[
    h^{(1)}(x)
    =
    \tanh\left(
    r_1\left(W^{(1)}x-c^{(1)}\right)
    \right),
\]
\[
    h^{(2)}(x)
    =
    \tanh\left(
    r_2\left(W^{(2)}h^{(1)}(x)-c^{(2)}\right)
    \right),
\]
and
\[
    \Phi_{\theta}(x)
    =
    {V^{(1)}}^{\top}h^{(1)}(x)
    +
    {V^{(2)}}^{\top}h^{(2)}(x).
\]
Here $W^{(1)}$ and $W^{(2)}$ are randomly sampled and fixed throughout training. The shift vectors $c^{(1)}$ and $c^{(2)}$ are obtained from empirical quantiles of the corresponding projected training data: for each neuron, the shift is set to a quantile level sampled uniformly from $[0.1,0.9]$, and kept fixed. Only the output weights $V^{(1)}$ and $V^{(2)}$ are trained.

Let
\[
    h(x)=
    \begin{bmatrix}
    h^{(1)}(x)\\
    h^{(2)}(x)
    \end{bmatrix},
    \qquad
    V=
    \begin{bmatrix}
    V^{(1)}\\
    V^{(2)}
    \end{bmatrix}.
\]
For fixed random features and fixed scales $(r_1,r_2)$, the model is linear in $V$:
\[
    \Phi_{\theta}(x)=h(x)^{\top}V.
\]
Given the hidden-feature matrix
\[
    H=
    \begin{bmatrix}
    h(x_1)^{\top}\\
    \vdots\\
    h(x_N)^{\top}
    \end{bmatrix},
    \qquad
    Y=
    \begin{bmatrix}
    y_1^{\top}\\
    \vdots\\
    y_N^{\top}
    \end{bmatrix},
\]
the output weights are obtained by least squares,
\[
    V^{\star}
    =
    \arg\min_V \|HV-Y\|_F^2.
\]
This closed-form training step replaces the non-convex optimization of a fully trained neural network.


\subsection*{Energy-matched scale selection}

The scales $r_1$ and $r_2$ determine the strength of the random projections before the nonlinear activation. We first compute them by a data-dependent energy-matching rule. Let
\[
    S_X=\frac{1}{N}\sum_{i=1}^{N}\|x_i\|_2^2,
    \qquad
    S_Y=\frac{1}{N}\sum_{i=1}^{N}\|y_i\|_2^2.
\]
We allocate fractions $\gamma_1$ and $\gamma_2$ of the output latent energy to the two hidden layers, with $\gamma_1+\gamma_2=1$.

The energy-matching rule provides a heuristic initialization. Since the entries of $W^{(1)}$ are sampled independently from $\mathrm{Unif}[-1,1]$, we have $\mathbb{E}[(W^{(1)}_{ij})^2]=1/3$. Ignoring the empirical anchor values and using the small-argument approximation $\tanh z\approx z$, the first-layer features satisfy $h^{(1)}(x)\approx r_1 W^{(1)}x$. Consequently,
\[
\mathbb{E}\|h^{(1)}(x)\|_2^2 \approx r_1^2 \frac{n_1}{3}\|x\|_2^2.
\]
Averaging over the training samples and matching the expected first-layer feature energy to the fraction $\gamma_1 S_Y$ of the output latent energy gives
\[
r_1^2 \frac{n_1}{3} S_X \approx \gamma_1 S_Y,
\]
which yields the first-layer energy-based scale
\[
    r_{1,{\rm var}}
    =
    \left(
    \frac{3\gamma_1S_Y}{n_1S_X}
    \right)^{1/2}.
\]

To avoid extremely small or saturated features, we clip it by
\[
    \bar r_1
    =
    \max\left\{
    r_{1,\min},
    \min\{r_{1,{\rm var}},r_{1,\max}\}
    \right\},
\]
where
\[
    r_{1,\min}
    =
    \left(
    \frac{3\tau^2}{n_1S_X}
    \right)^{1/2},
    \qquad
    r_{1,\max}
    =
    \left(
    \frac{3\delta^2}{S_X}
    \right)^{1/2}.
\]

After computing $\bar r_1$, we form the pilot first-layer features $H^{(1)}_{\rm pilot}$ and define
\[
    S_{H_1}=\frac{1}{N}\|H^{(1)}_{\rm pilot}\|_F^2.
\]
The second-layer scale is initialized by
\[
    r_{2,{\rm var}}
    =
    \left(
    \frac{3\gamma_2S_Y}{n_2S_{H_1}}
    \right)^{1/2},
\]
and clipped as
\[
    \bar r_2
    =
    \max\left\{
    r_{2,\min},
    \min\{r_{2,{\rm var}},r_{2,\max}\}
    \right\},
\]
with
\[
    r_{2,\min}
    =
    \left(
    \frac{3\tau^2}{n_2S_{H_1}}
    \right)^{1/2},
    \qquad
    r_{2,\max}
    =
    \left(
    \frac{3\delta^2}{S_{H_1}}
    \right)^{1/2}.
\]

We use $\gamma_1=0.7$ and $\gamma_2=0.3$ for all benchmark problems. The clipping parameters are fixed to $\tau=\delta=2$ throughout the experiments. The pair $(\bar r_1,\bar r_2)$ defines the energy-matched PCA--RaNN. Because this derivation ignores anchor shifts, activation saturation, feature correlations, and the subsequent least-squares readout, the energy-matched scales should be interpreted as a stable data-dependent initialization rather than an optimality condition. This version requires no iterative optimization of hidden-layer parameters and is therefore the fastest variant.


\subsection*{BFGS refinement of scale parameters}

Although the energy-matched scales provide a stable initialization, they do not directly minimize the prediction error. We therefore introduce a refined variant that optimizes only the two scale parameters while keeping all random weights fixed.

We parameterize
\[
    r_1(\eta_1)=\bar r_1\exp(\eta_1),
    \qquad
    r_2(\eta_2)=\bar r_2\exp(\eta_2),
\]
where $\eta=(\eta_1,\eta_2)^{\top}$. The exponential parameterization ensures positive scales.

The nominal training set is split into an inner training subset and a validation subset with an $80/20$ ratio. The validation subset is used only for selecting the scale parameters and is not used in the final test evaluation. In the BFGS-refined variant, the PCA encoders and decoders used during scale selection are constructed from the inner training subset only, so that the validation subset is not used for PCA fitting, readout fitting or scale initialization. After the scales are selected, this fixed PCA representation is kept unchanged, and the final output layer is refitted on the available training samples by least squares using the refined scales.
For each candidate $\eta$, the output weights are fitted on the inner training subset by ridge regression,
\[
    A_{\lambda}(\eta)
    =
    \left(
    H_{\rm tr}(\eta)^{\top}H_{\rm tr}(\eta)+\lambda I
    \right)^{-1}
    H_{\rm tr}(\eta)^{\top}Y_{\rm tr}.
\]
We use a fixed ridge parameter, with $\lambda=10^{-4}$ for Burgers, Darcy flow, and Navier--Stokes, and $\lambda=10^{-12}$ for the backward heat equation benchmark. The scale parameters are selected by minimizing the validation loss
\[
    J(\eta)
    =
    \frac{1}{N_{\rm val}d_{\mathcal{U}}}
    \left\|
    H_{\rm val}(\eta)A_{\lambda}(\eta)-Y_{\rm val}
    \right\|_F^2.
\]
We minimize $J(\eta)$ using BFGS initialized at $\eta=0$, using analytical gradients with respect to the two scale parameters. After obtaining $\eta^{\star}$, the refined scales are
\[
    r_1^{\star}=\bar r_1\exp(\eta_1^{\star}),
    \qquad
    r_2^{\star}=\bar r_2\exp(\eta_2^{\star}).
\]
The final output layer is then refitted on the full training set by least squares using the fixed refined scales. This gives the BFGS-refined PCA--RaNN.

For ensembles, BFGS is performed once on a representative random-feature realization, and the resulting scales are reused as shared global hyperparameters for all ensemble members. Each member then refits its own least-squares readout on the full training set. Thus, the shared scales are not member-wise validation optima.


\subsection*{Ensemble prediction}

To reduce the variance caused by random features, we use an ensemble of $K$ independently initialized PCA--RaNN models. The ensemble prediction is
\[
    \mathcal{G}_{\rm ens}(a)
    =
    \frac{1}{K}\sum_{k=1}^{K}\mathcal{G}^{(k)}_{\theta}(a).
\]
All ensemble members share the same PCA encoders and decoders, but use independently sampled random hidden weights. For the BFGS-refined version, the optimized scales are shared across ensemble members, and each member has its own least-squares output layer.


\subsection*{Uncertainty quantification}

The ensemble structure provides an empirical estimate of predictive variability. For a test input $a$, let $\widehat u^{(1)}(x_j),\ldots,\widehat u^{(K)}(x_j)$ be the predictions of the $K$ ensemble members at grid point $x_j$. The ensemble mean and standard deviation are
\[
    \mu(x_j)
    =
    \frac{1}{K}
    \sum_{k=1}^{K}
    \widehat u^{(k)}(x_j),
\]
and
\[
    \sigma(x_j)
    =
    \left[
    \frac{1}{K-1}
    \sum_{k=1}^{K}
    \left(
    \widehat u^{(k)}(x_j)-\mu(x_j)
    \right)^2
    \right]^{1/2}.
\]

To obtain calibrated prediction intervals, we use split conformal prediction. Given an independent calibration set $\mathcal{D}_{\rm cal}=\{(a_i,u_i)\}_{i=1}^{N_{\rm cal}}$, we compute the nonconformity scores
\[
    s_{ij}
    =
    \frac{|u_i(x_j)-\mu_i(x_j)|}{\max\{\sigma_i(x_j),\varepsilon\}},
    \qquad
    i=1,\ldots,N_{\rm cal},
    \quad
    j=1,\ldots,M,
\]
where $M$ is the number of grid points and \(\varepsilon>0\) is a small constant used only to avoid division by zero when the ensemble standard deviation is numerically vanishing. In all experiments, we set \(\varepsilon=10^{-8}\), which is several orders of magnitude smaller than the typical ensemble standard deviations observed on the calibration sets and therefore has negligible effect on the reported interval widths except at points with nearly zero ensemble variance. Let $n_{\rm cal}=N_{\rm cal}M$ and let $s_{(1)}\leq\cdots\leq s_{(n_{\rm cal})}$ denote the sorted pooled nonconformity scores. For a nominal level $\alpha$, we set
\[
    q_{1-\alpha}=s_{(k)},
    \qquad
    k=\left\lceil (n_{\rm cal}+1)(1-\alpha)\right\rceil,
\]
with the convention $q_{1-\alpha}=+\infty$ if $k>n_{\rm cal}$. The conformal prediction interval for a new input is
\[
    {\rm PI}_{1-\alpha}(a,x_j)
    =
    \left[
    \mu(x_j)-q_{1-\alpha}\max\{\sigma(x_j),\varepsilon\},
    \mu(x_j)+q_{1-\alpha}\max\{\sigma(x_j),\varepsilon\}
    \right].
\]

Under the standard exchangeability assumption on the calibration and test samples, this pooled construction provides finite-sample pooled marginal coverage at the prescribed confidence level. This is neither a pointwise guarantee at each fixed grid location, nor a simultaneous or conditional one.


\subsection*{Rapid online adaptation}

The linear output layer also allows fast online adaptation when a small number of new input--output pairs become available. Suppose the hidden-feature extractor has been fixed, and let $V\in\mathbb{R}^{(n_1+n_2)\times d_{\mathcal{U}}}$ be the current output weight matrix. For a new latent pair $(x_{\rm new},y_{\rm new})$, let $h_{\rm new}=h(x_{\rm new})\in\mathbb{R}^{n_1+n_2}$ be its hidden-feature vector. Recursive least squares updates the output layer without retraining the hidden layers. Let
\[
    P=(H^{\top}H+\lambda I)^{-1}
\]
be the inverse covariance matrix associated with the previous training data. In the online adaptation experiments, $\lambda$ is fixed to $1\times10^{-2}$, used only to initialize the inverse covariance matrix for the RLS update. The update is
\[
    k = \frac{P h_{\rm new}}{1+h_{\rm new}^{\top}P h_{\rm new}},
\]
\[
    e_{\rm new}^{\top} = y_{\rm new}^{\top} - h_{\rm new}^{\top}V,
\]
\[
    V \leftarrow V + k e_{\rm new}^{\top},
    \qquad
    P \leftarrow P - k h_{\rm new}^{\top}P .
\]
For multiple new samples, the update is applied sequentially. Since only the linear output layer is modified, the cost per new sample is $O((n_1+n_2)^2)$, which is substantially cheaper than retraining a fully optimized neural operator.


\subsection*{Benchmark problems and datasets}

We evaluate the methods on four parametric PDE benchmarks: the viscous Burgers' equation, Darcy flow, the two-dimensional incompressible Navier--Stokes equation in vorticity form, and the backward heat equation. For ensemble predictions, we use $K=20$ independently initialized members for all benchmarks.

\paragraph{Burgers' equation.}
We consider the one-dimensional viscous Burgers' equation
\[
    u_t+u u_x-\nu u_{xx}=0,
    \qquad
    (x,t)\in(0,1)\times(0,1],
\]

with periodic boundary conditions and viscosity $\nu=0.01$. The task is to learn the operator $\mathcal{G}:u_0(x)\mapsto u(x,t)$ from the initial condition to the full space--time solution. Following the data-generation protocol of Wang et al.~(\cite{Wang2021Learning}), the initial conditions are sampled from a Gaussian random field with covariance operator $625(-\Delta+25I)^{-4}$, where the Laplacian is equipped with periodic boundary conditions. The numerical solutions are represented on a uniform $101\times101$ space--time grid. We use $N_{\rm train}=1000$ training samples and $N_{\rm test}=100$ test samples. The latent dimensions are $d_{\mathcal{A}}=d_{\mathcal{U}}=100$, and the hidden widths are $(n_1,n_2)=(500,200)$.

\paragraph{Darcy flow.}
We consider the elliptic Darcy equation
\[
    -\nabla\cdot(a(x,y)\nabla u(x,y))=f(x,y),
    \qquad
    (x,y)\in(0,1)^2,
\]
with homogeneous Dirichlet boundary conditions and fixed source term $f\equiv1$. The task is to learn the operator $\mathcal{G}:a(x,y)\mapsto u(x,y)$ from the permeability field to the pressure solution. We use the Darcy dataset from the Fourier neural operator benchmark~(\cite{Li2020Fourier}). The permeability field is generated from a Gaussian random field $\mu\sim\mathcal{N}(0,(-\Delta+9I)^{-2})$ (zero Neumann Laplacian), thresholded to $a=12$ if $\mu\geq0$ and $a=3$ otherwise. Both the permeability and pressure fields are represented on a uniform $29\times29$ grid. We use $N_{\rm train}=1000$ training samples and $N_{\rm test}=200$ test samples. The latent dimensions are $d_{\mathcal{A}}=d_{\mathcal{U}}=150$, and the hidden widths are $(n_1,n_2)=(3500,1000)$.

\paragraph{Navier--Stokes equation.}
We consider the two-dimensional incompressible Navier--Stokes equation in vorticity form,
\[
    \partial_t\omega+v\cdot\nabla\omega = \nu\Delta\omega+f, \qquad
    \nabla\cdot v=0,
\]
with viscosity $\nu=10^{-3}$ on the periodic domain $(0,1)^2$, where the forcing term is fixed as $f(x,y)=0.1\sin(2\pi(x+y))+0.1\cos(2\pi(x+y))$, following the Fourier neural operator benchmark~(\cite{Li2020Fourier}). The task is to learn the temporal prediction operator $\mathcal{G}:\{\omega(\cdot,t_n)\}_{n=1}^{10}\mapsto\{\omega(\cdot,t_n)\}_{n=11}^{20}$, mapping the first ten vorticity snapshots to the subsequent ten. Initial vorticity fields are sampled from the Gaussian random field $\omega_0\sim\mathcal{N}(0,7^{3/2}(-\Delta+49I)^{-2.5})$ with periodic boundary conditions, and all fields are represented on a uniform $64\times64$ spatial grid. We use $N_{\rm train}=1000$ training samples and $N_{\rm test}=200$ test samples. The latent dimensions are $d_{\mathcal{A}}=d_{\mathcal{U}}=150$, and the hidden widths are $(n_1,n_2)=(3000,3000)$.

\paragraph{Backward heat equation.}
We consider the two-dimensional heat equation
\[
    u_t=\alpha(u_{xx}+u_{yy}),\qquad (x,y,t)\in(0,1)^2\times(0,1],
\]
with homogeneous Dirichlet boundary conditions and diffusivity $\alpha=0.05$. The task is to learn a data-driven backward reconstruction map $\mathcal{G}:u(\cdot,1)\mapsto\{u(\cdot,t_n)\}_{n=0}^{19}$, which maps the terminal temperature field to the preceding 20 snapshots at $t_n=0.05n$. This task is an ill-posed inverse problem in the classical sense: backward heat propagation amplifies high-frequency perturbations and does not define a stable inverse map on general function spaces. In this benchmark, the problem is therefore interpreted as a regularized statistical reconstruction task on the prescribed smooth data distribution; the finite-dimensional discretization, the smooth data prior and the PCA truncation provide implicit regularization. The initial condition is generated by sampling a Gaussian random field on the $64\times64$ grid with squared-exponential covariance $K(x,x')=\exp(-\|x-x'\|^2/(2l^2))$, $l=0.2$, multiplied by $\sin(\pi x)\sin(\pi y)$ to enforce the boundary condition. The forward heat equation is solved using a five-point finite-difference Laplacian and a Crank--Nicolson scheme with $\Delta t=0.005$, storing 21 equally spaced snapshots from $t=0$ to $1$. The final snapshot at $t=1$ serves as input, and the previous 20 as output. We use $N_{\rm train}=1000$, $N_{\rm test}=200$, latent dimensions $d_{\mathcal{A}}=d_{\mathcal{U}}=100$, and hidden widths $(n_1,n_2)=(400,100)$.

\paragraph{Training and reporting.}

For each benchmark, the PCA construction is performed before any validation, calibration or test evaluation and uses only the training snapshots. We compare two PCA--RaNN variants: the energy-matched version, in which the scales $(\bar r_1,\bar r_2)$ are determined by the energy-matching rule and the output layer is fitted by least squares; and the BFGS-refined version, in which the scales are initialized by energy matching and then refined on a validation loss. After the scale parameters have been selected, the output layer is refitted on the full training set by least squares. All reported PCA--RaNN results use a fixed train--test split and a fixed random seed unless otherwise stated. For the BFGS-refined variant, the training set is further split into an $80\%$ inner training subset and a $20\%$ validation subset for scale selection; after the scales are selected, the output layer is refitted on the full training set. We report the test relative $L^2$ error and the corresponding wall-clock training time measured on a single NVIDIA RTX A6000 GPU. The reported wall-clock time includes PCA fitting, random-feature construction, construction of shift vectors, all least-squares or ridge-regression solves, BFGS validation-loss and gradient evaluations when applicable, and fitting of all ensemble members. It excludes dataset generation and data loading. Single-model and ensemble results with $K=20$ members are presented separately. All models are implemented in PyTorch 2.5.1.


\clearpage
\section*{Acknowledgements}
This work was supported by the National Key R\&D Program of China (Grant No. 2025YFA1016400).

\section*{Author contributions}
Z.D. conceived the main technical idea, developed the PCA--RaNN framework, implemented the numerical experiments, processed the data, performed the visualizations, and wrote the initial draft. J.S. contributed to the BFGS scale-refinement strategy and implemented the BFGS-refined PCA--RaNN pipeline. D.M. led the broader research program supporting this study, provided resources and project-level support, offered strategic guidance, and reviewed and edited the manuscript. F.W. supervised Z.D., led the formulation of the research direction, coordinated the overall progress of the study, contributed key methodological ideas to the model design and implementation, provided research guidance and resources, and reviewed and edited the manuscript. All authors discussed the results and approved the final manuscript.

\section*{Competing interests}
The authors declare no competing interests.

\section*{Data availability}
The datasets used in this study are either generated following the procedures described in Methods or obtained from publicly available operator-learning benchmark datasets cited therein. The processed datasets and train/validation/test splits supporting the results of this study will be made available in a public repository before publication.

\section*{Code availability}
The code used to generate the results in this study, including PCA construction, randomized feature generation, scale refinement, ensemble prediction, conformal calibration and recursive least-squares online adaptation, will be made available to editors and reviewers during peer review and released publicly upon publication.

\section*{Additional information}
Supplementary information is available for this paper. Correspondence and requests for materials should be addressed to F.W.

\clearpage
\renewcommand{\figurename}{Extended Data Fig.}
\renewcommand{\tablename}{Extended Data Table}
\setcounter{figure}{0}  
\setcounter{table}{0}   

\begin{figure}[htbp]
    \centering
    \includegraphics[width=0.95\linewidth]{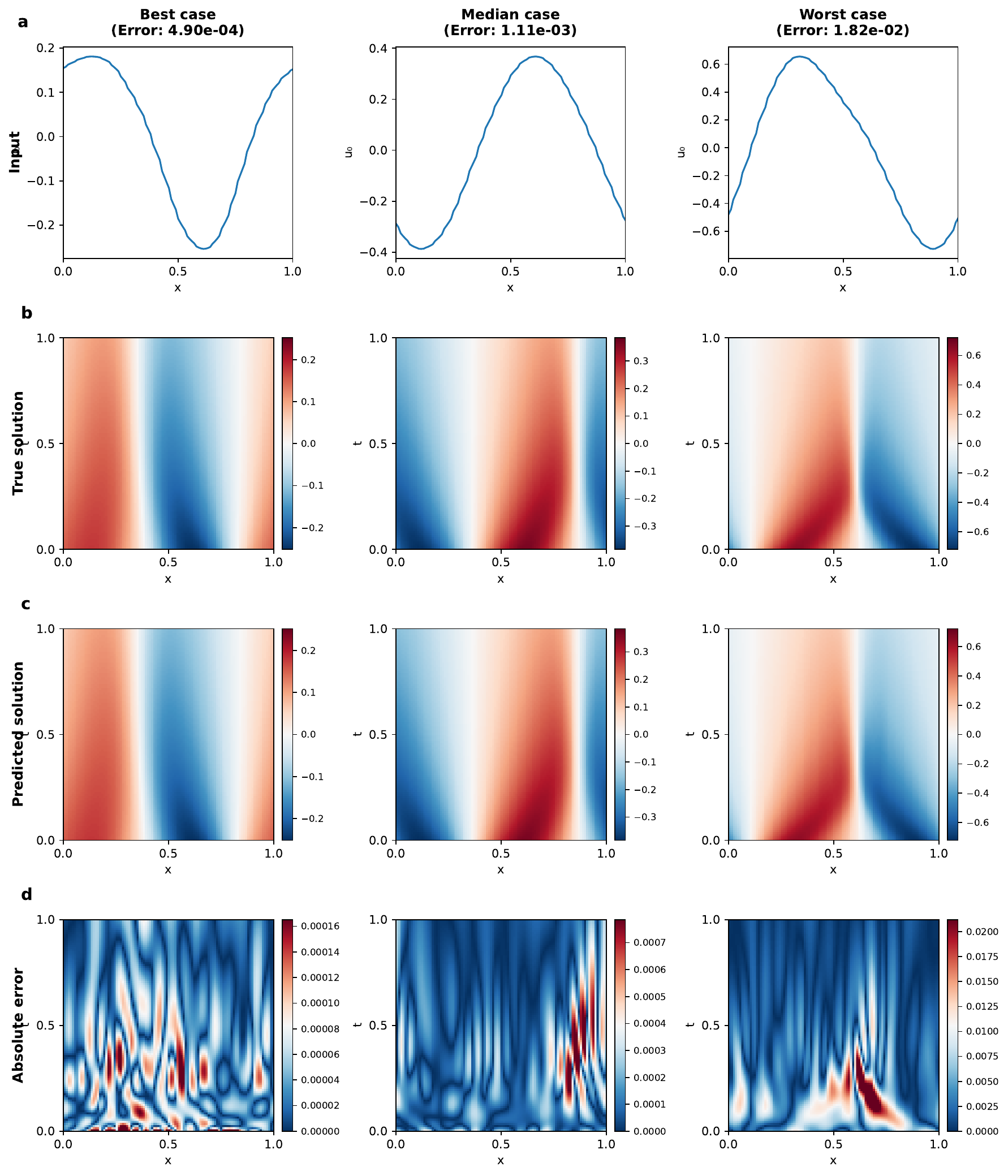}
    \caption{\textbf{Representative test samples for Burgers' equation.} 
    Three test cases are shown with increasing relative $L^2$ error from left to right (minimum, median, maximum). \textbf{a}, Input initial condition $u_0(x)$. \textbf{b}, True solution $u(x,t)$. \textbf{c}, Predicted solution. \textbf{d}, Absolute error. The model accurately captures the formation of steep gradients across all cases, with errors concentrated near sharp transition regions.}
    \label{fig:extended_Burgers}
\end{figure}

\begin{figure}[htbp]
    \centering
    \includegraphics[width=0.95\linewidth]{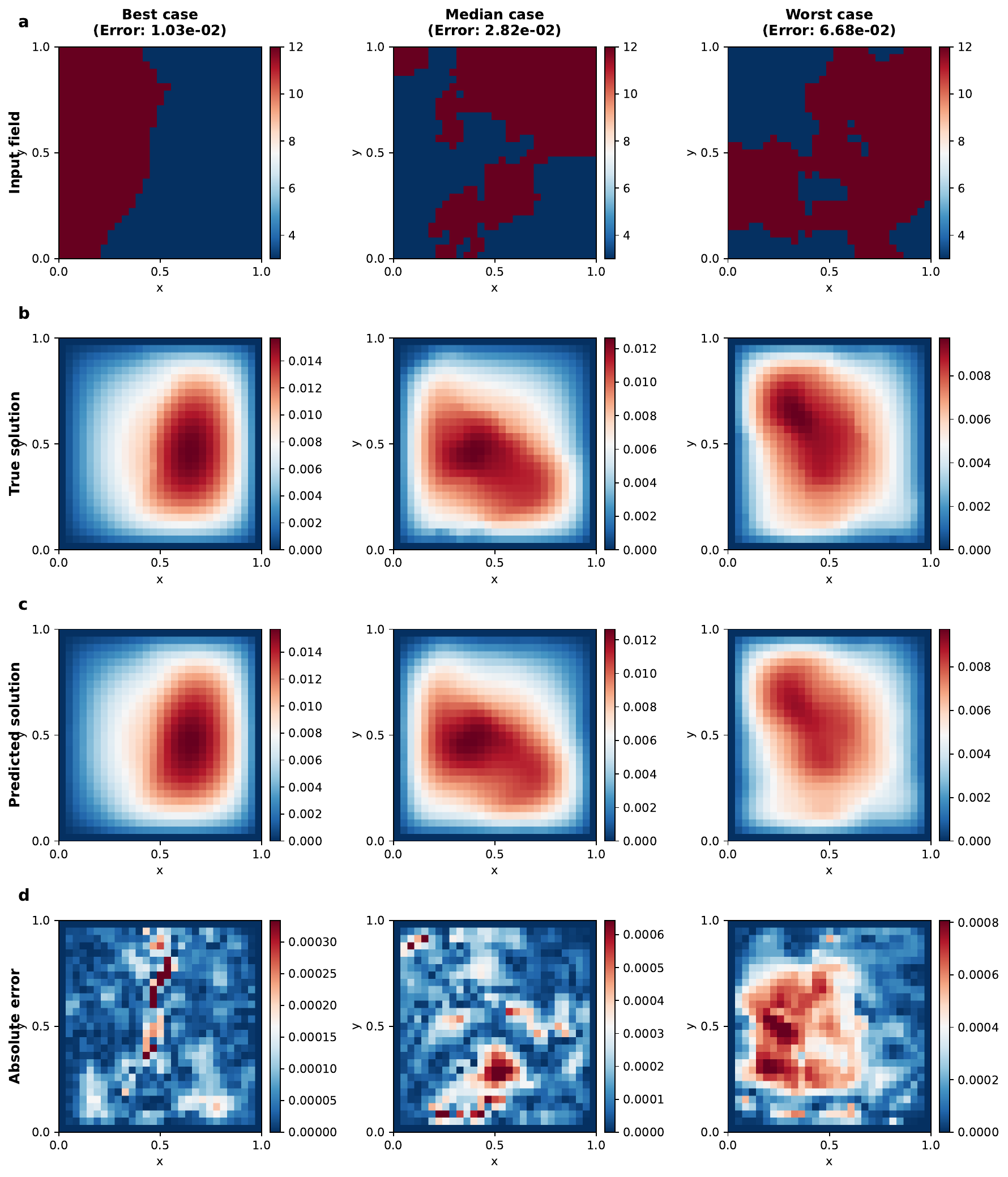}
    \caption{\textbf{Representative test samples for Darcy's equation.} 
    Three test cases are shown with increasing relative $L^2$ error from left to right (minimum, median, maximum). \textbf{a}, Input permeability field $a(x,y)$. \textbf{b}, True pressure field $u(x,y)$. \textbf{c}, Predicted pressure field. \textbf{d}, Absolute error. The model effectively captures the complex spatial patterns of the pressure field across heterogeneous media, with errors distributed primarily in high-gradient regions.}
    \label{fig:extended_Darcy}
\end{figure}

\begin{figure}[htbp]
    \centering
    \includegraphics[width=0.95\linewidth]{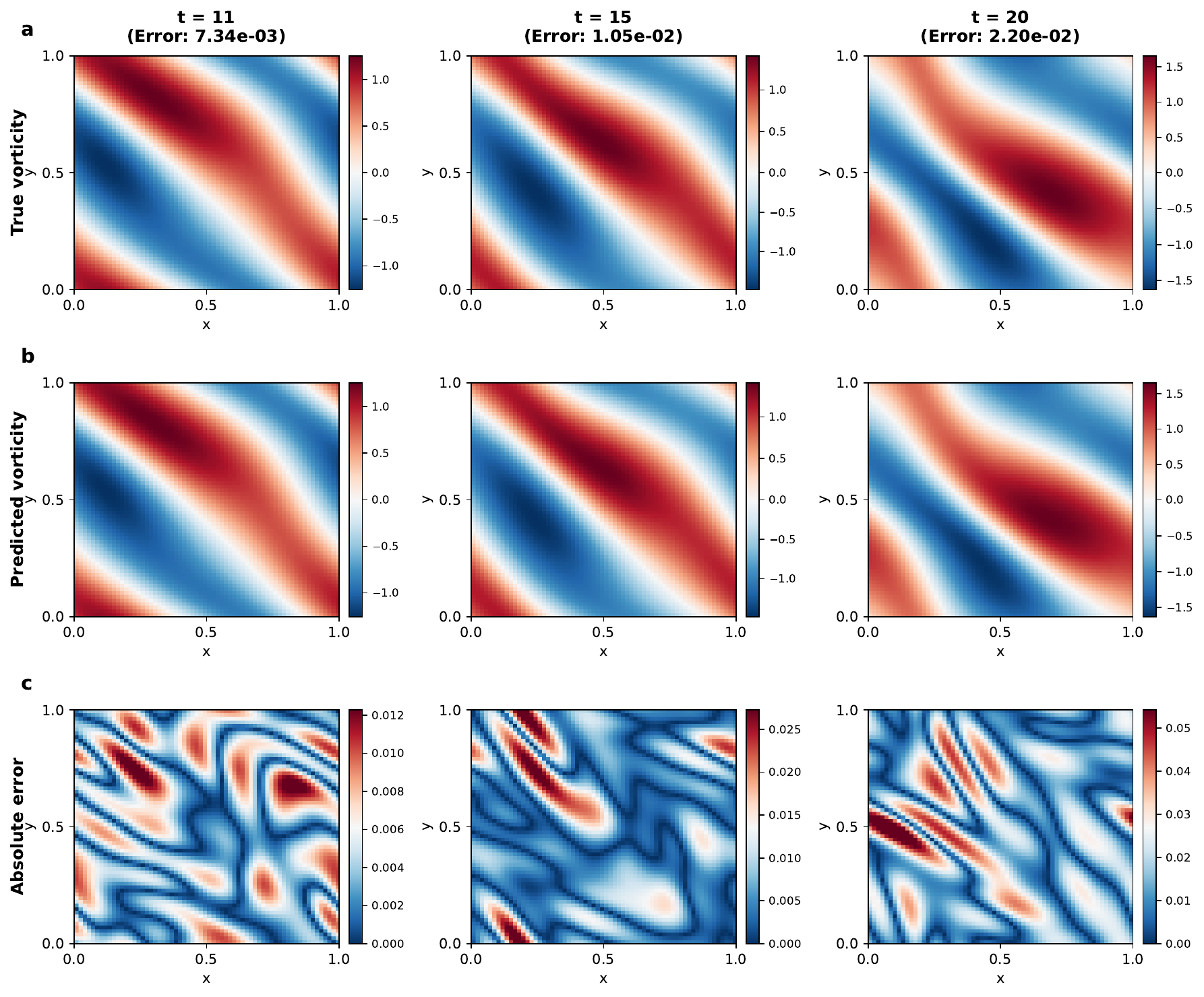}
    \caption{\textbf{Representative test sample for Navier--Stokes equation.} 
    Snapshots at $t=11$, $t=15$, and $t=20$ are shown from left to right. \textbf{a}, True vorticity field. \textbf{b}, Predicted vorticity field. \textbf{c}, Absolute error. The model captures the evolution of vorticity structures, with prediction errors remaining low throughout the predicted time window.}
    \label{fig:extended_NS}
\end{figure}

\begin{figure}[htbp]
    \centering
    \includegraphics[width=0.95\linewidth]{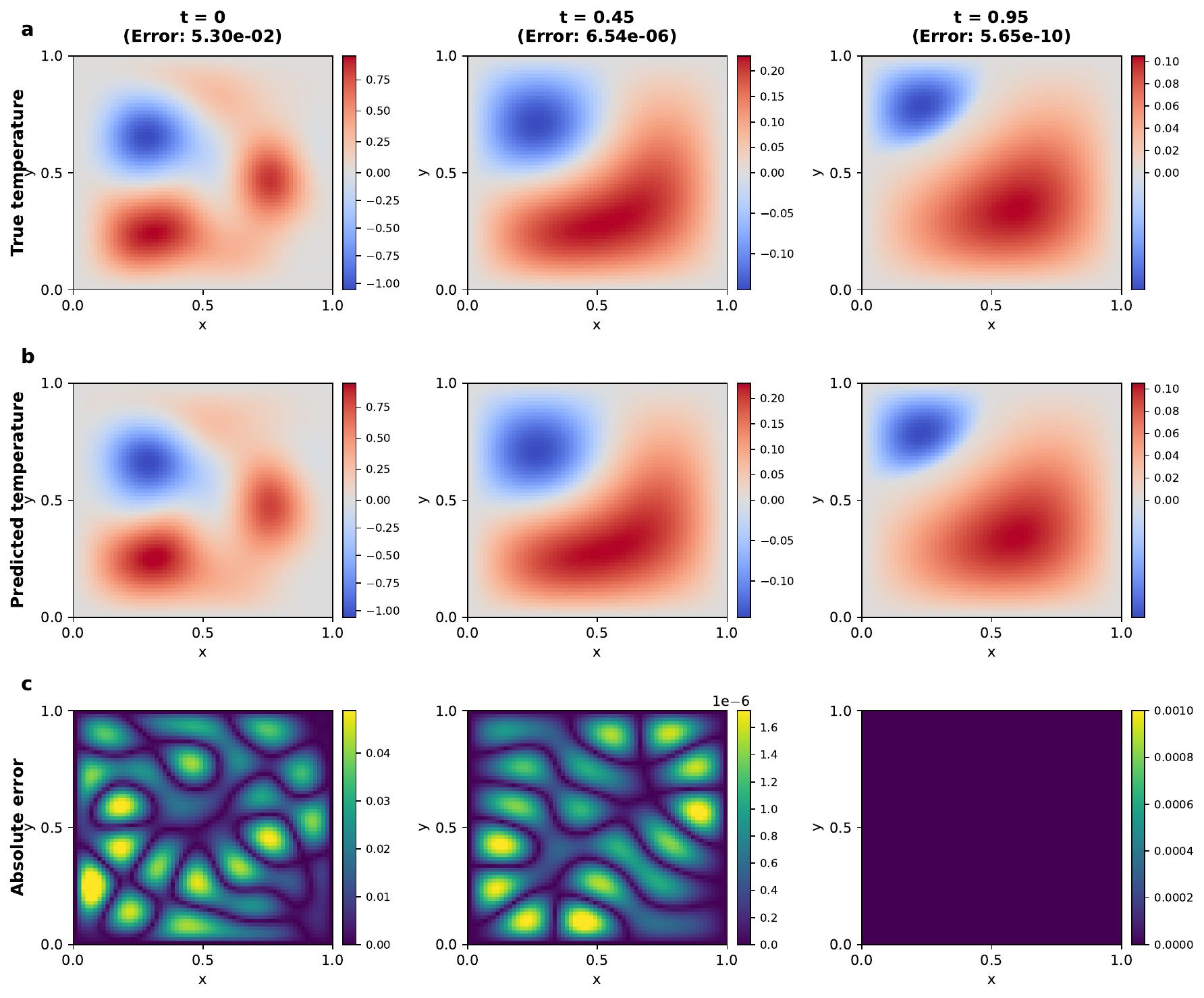}
    \caption{\textbf{Representative test sample for the backward heat equation.} 
    Snapshots at $t=0$, $t=0.45$, and $t=0.95$ are shown from left to right. 
    \textbf{a}, True temperature field. 
    \textbf{b}, Predicted temperature field. 
    \textbf{c}, Absolute error. 
    The model reconstructs the temperature trajectory for samples drawn from the same data prior, with errors increasing toward earlier times where the inverse problem is more ill-posed.}
    \label{fig:extended_Heat}
\end{figure}

\clearpage
\begin{table*}[ht]
\centering
\caption{\textbf{Comparison of single-member and ensemble PCA--RaNN models.}
Single-member results are averaged over $K=20$ independently initialized members; the reported standard deviation is the mean of the per-member test-sample standard deviations. Ensemble results use averaged predictions from the same $K=20$ members; the standard deviation is computed over test samples. For the BFGS variant, all members share the representative-member refined scales.}
\label{tab:ensemble_vs_single}
\begin{tabular}{@{}lcccccc@{}}
\toprule
& \multicolumn{3}{c}{\textbf{Single Member}} & \multicolumn{3}{c}{\textbf{Ensemble Model}} \\
\cmidrule(lr){2-4} \cmidrule(lr){5-7}
\textbf{Problem} & Rel. $L^2$ Error & Error Std & Time (s) & Rel. $L^2$ Error & Error Std & Time (s) \\
\midrule
\multicolumn{7}{c}{\textbf{EM}} \\
Burgers & $4.89\times10^{-3}$ & $5.09\times10^{-3}$ & 0.6 & $\mathbf{2.28\times10^{-3}}$ & $\mathbf{2.79\times10^{-3}}$ & 2.3 \\
Darcy   & $3.52\times10^{-2}$ & $1.17\times10^{-2}$ & 0.8 & $\mathbf{2.95\times10^{-2}}$ & $\mathbf{1.00\times10^{-2}}$ & 10.6 \\
Navier-Stokes & $3.12\times10^{-2}$ & $1.43\times10^{-2}$ & 3.6 & $\mathbf{2.59\times10^{-2}}$ & $\mathbf{1.34\times10^{-2}}$ & 19.4 \\
Backward heat eqn. & $3.86\times10^{-1}$ & $1.75\times10^{-1}$ & 3.1 & $\mathbf{2.89\times10^{-1}}$ & $\mathbf{9.05\times10^{-2}}$ & 4.6 \\
\midrule
\multicolumn{7}{c}{\textbf{BFGS}} \\
Burgers & $4.33\times10^{-3}$ & $4.52\times10^{-3}$ & 3.0 & $\mathbf{1.93\times10^{-3}}$ & $\mathbf{2.67\times10^{-3}}$ & 5.5 \\
Darcy   & $3.52\times10^{-2}$ & $1.12\times10^{-2}$ & 2.1 & $\mathbf{2.94\times10^{-2}}$ & $\mathbf{9.58\times10^{-3}}$ & 14.6 \\
Navier-Stokes & $2.93\times10^{-2}$ & $1.33\times10^{-2}$ & 67.0 & $\mathbf{2.47\times10^{-2}}$ & $\mathbf{1.26\times10^{-2}}$ & 88.7 \\
Backward heat eqn. & $6.27\times10^{-2}$ & $2.25\times10^{-2}$ & 7.7 & $\mathbf{5.59\times10^{-2}}$ & $\mathbf{1.71\times10^{-2}}$ & 9.8 \\
\bottomrule
\end{tabular}
\end{table*}

\clearpage
\appendix
\setcounter{section}{0}
\renewcommand{\thesection}{S\arabic{section}}
\section{Supplementary Note 1: Approximation theory for PCA--RaNN}
\label{sec:SI_theory}

This note provides theoretical support for the randomized-feature component of the PCA--RaNN framework introduced in the main text. We first recall the approximation guarantees for the PCA--NN method \cite{Bhattacharya2021}, and then extend them to the randomized setting, showing that a single-hidden-layer RaNN can replace the deterministic latent network while preserving the overall convergence rate with high probability. Throughout, we adopt the notation of the main paper: $\mathcal{A}$ and $\mathcal{U}$ are separable Hilbert spaces, $\mu$ is a probability measure on $\mathcal{A}$, and $\mathcal{G}:\mathcal{A}\to\mathcal{U}$ is the target operator.

\subsection{PCA--NN approximation (background)}
\label{sec:SI_PCANN}

The PCA--NN method \cite{Bhattacharya2021} approximates $\mathcal{G}$ by first compressing inputs and outputs via PCA and then learning a neural network $\Psi$ between the latent spaces. Its total error can be decomposed into a PCA projection error and a latent-network approximation error.

\begin{theorem}[PCA sampling error, \cite{Bhattacharya2021}]
\label{thm:SI_PCA_sampling}
Let $\mu$ be a probability measure on $\mathcal{A}$ with $\mathbb{E}_{a\sim\mu}\|a\|_{\mathcal{A}}^4<\infty$. For the empirical $d$-dimensional PCA subspace $V_{d,N}$ constructed from $N$ i.i.d. samples, define the empirical projection error $R_{\mathcal{A}}(N,d)=\mathbb{E}_{a\sim\mu}\|a-\Pi_{V_{d,N}}a\|_{\mathcal{A}}^2$ and the optimal projection error $R_{\mathcal{A}}(d)=\min_{\dim(V)=d}\mathbb{E}_{a\sim\mu}\|a-\Pi_V a\|_{\mathcal{A}}^2$. Then there exists a constant $Q=Q(\mu)>0$ such that
\[
\mathbb{E}\big[R_{\mathcal{A}}(N,d)\big] \le R_{\mathcal{A}}(d) + \sqrt{\frac{Qd}{N}}.
\]
\end{theorem}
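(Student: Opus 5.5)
The approach is to write both projection errors as traces against covariance operators and reduce the gap to a Hilbert--Schmidt concentration estimate for the empirical covariance. Let $C=\mathbb{E}_{a\sim\mu}[a\otimes a]$ denote the (uncentered) covariance operator and $C_N=\frac1N\sum_{i=1}^N a_i\otimes a_i$ its empirical counterpart. The standing assumption $\mathbb{E}\|a\|^2<\infty$ makes $C$ trace class, and $\mathbb{E}\|a\|^4<\infty$ makes $a\otimes a$ square-integrable in the Hilbert--Schmidt norm. For any $d$-dimensional subspace $V$ we have
\[
\mathbb{E}_{a\sim\mu}\|a-\Pi_V a\|^2=\operatorname{tr}(C)-\operatorname{tr}(\Pi_V C).
\]
Hence $R_{\mathcal{A}}(d)=\operatorname{tr}(C)-\sup_{\dim V=d}\operatorname{tr}(\Pi_V C)$, with the supremum attained by the top-$d$ eigenspace $V_d$ of $C$ (Ky Fan). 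Similarly, $V_{d,N}$ maximizes $\operatorname{tr}(\Pi_V C_N)$ over $d$-dimensional $V$. In the centered variant used in the paper one replaces $a$ by $a-\bar a$; the extra mean-estimation term is $O(1/N)$ and can be absorbed into the constant.

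The core step is a comparison argument. Write
\[
R_{\mathcal{A}}(N,d)-R_{\mathcal{A}}(d)=\operatorname{tr}(\Pi_{V_d}C)-\operatorname{tr}(\Pi_{V_{d,N}}C).
\]
Insert $\pm\operatorname{tr}(\Pi_{V_d}C_N)$ and $\pm\operatorname{tr}(\Pi_{V_{d,N}}C_N)$, and use the optimality $\operatorname{tr}(\Pi_{V_d}C_N)\le\operatorname{tr}(\Pi_{V_{d,N}}C_N)$. This gives
\[
R_{\mathcal{A}}(N,d)-R_{\mathcal{A}}(d)\le \operatorname{tr}\big(\Pi_{V_d}(C-C_N)\big)+\operatorname{tr}\big(\Pi_{V_{d,N}}(C_N-C)\big).
\]
Each term has the form $\operatorname{tr}(\Pi(C-C_N))$ with $\Pi$ a rank-$d$ orthogonal projection. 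By Cauchy--Schwarz in the Hilbert--Schmidt inner product, $|\operatorname{tr}(\Pi A)|\le\|\Pi\|_{HS}\|A\|_{HS}=\sqrt d\,\|A\|_{HS}$, so the gap is at most $2\sqrt d\,\|C-C_N\|_{HS}$. This bound is uniform in the (data-dependent) subspace $V_{d,N}$, which is precisely what lets us handle the dependence between $V_{d,N}$ and the data.

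It remains to take expectations over the sample. By Jensen, $\mathbb{E}\|C-C_N\|_{HS}\le(\mathbb{E}\|C-C_N\|_{HS}^2)^{1/2}$. Since $C_N-C$ is an average of $N$ i.i.d. mean-zero random elements $a_i\otimes a_i-C$ of the Hilbert space of Hilbert--Schmidt operators, orthogonality of independent centered terms gives
\[
\mathbb{E}\|C_N-C\|_{HS}^2=\frac1N\,\mathbb{E}\|a\otimes a-C\|_{HS}^2\le\frac1N\,\mathbb{E}\|a\|^4 .
\]
Combining the pieces yields
\[
\mathbb{E}[R_{\mathcal{A}}(N,d)]\le R_{\mathcal{A}}(d)+2\sqrt{\frac{d\,\mathbb{E}\|a\|^4}{N}},
\]
which is the claim with $Q=4\,\mathbb{E}\|a\|^4$ (plus the centering correction).

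The main obstacle is the comparison step: the empirical subspace $V_{d,N}$ is random and correlated with $C_N$, so one cannot simply bound the expectation of $\operatorname{tr}(\Pi_{V_{d,N}}(C_N-C))$ by a fixed-projection argument. The Hilbert--Schmidt Cauchy--Schwarz bound circumvents this by being uniform over all rank-$d$ projections, at the price of the $\sqrt d$ factor that appears in the statement. A secondary technical point is justifying the Ky Fan characterization of the minimum and the trace identities in infinite dimensions, which follows from the compactness and trace-class property of $C$.
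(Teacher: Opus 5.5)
Your proof is correct. The paper itself gives no proof of this statement; it quotes the result as background from Bhattacharya et al.\ (their Theorem 3.4), so there is no in-paper argument to compare against.

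What you supply is a complete, self-contained proof along the standard covariance route. It rests on four facts:
\begin{itemize}
\item the trace identity $\mathbb{E}\|a-\Pi_V a\|^2=\operatorname{tr}C-\operatorname{tr}(\Pi_V C)$;
\item the optimality of $V_{d,N}$ for $C_N$;
\item the bound $|\operatorname{tr}(\Pi A)|\le\sqrt d\,\|A\|_{HS}$, uniform over rank-$d$ projections, which handles the dependence of $V_{d,N}$ on the data;
\item the i.i.d.\ second-moment estimate $\mathbb{E}\|C_N-C\|_{HS}^2\le \mathbb{E}\|a\|^4/N$.
\end{itemize}
This is exactly the mechanism that the fourth-moment hypothesis and the $\sqrt{Qd/N}$ form of the bound call for.

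One small sharpening is available. The term $\operatorname{tr}(\Pi_{V_d}(C-C_N))$ involves the deterministic subspace $V_d$. Since $\mathbb{E}\,\operatorname{tr}(\Pi_{V_d}C_N)=\mathbb{E}\|\Pi_{V_d}a\|^2=\operatorname{tr}(\Pi_{V_d}C)$, its expectation is exactly zero. So after taking expectations, only the data-dependent term $\operatorname{tr}(\Pi_{V_{d,N}}(C_N-C))$ needs the uniform Hilbert--Schmidt bound. This removes your factor $2$ and gives $Q=\mathbb{E}\|a\|^4$ rather than $4\,\mathbb{E}\|a\|^4$.

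Your aside on centered PCA is not needed for this statement. The supplementary remarks state explicitly that the theorem is formulated for uncentered PCA. The centered version is compared against a different benchmark $R^c_{\mathcal A}(d)$, so it is a separate claim rather than a constant to absorb.
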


\begin{theorem}[PCA--NN approximation, \cite{Bhattacharya2021}]
\label{thm:SI_PCANN}
Let $\mu$ be a probability measure on $\mathcal{A}$ with $\mathbb{E}_{a\sim\mu}\|a\|_{\mathcal{A}}^4<\infty$, and let $\mathcal{G}:\mathcal{A}\to\mathcal{U}$ be $\mu$-measurable and globally Lipschitz. For any $\varepsilon>0$, there exist latent dimensions $d_{\mathcal{A}},d_{\mathcal{U}}\in\mathbb{N}$, sample size $N\in\mathbb{N}$, a truncation parameter $\delta_0>0$, an approximation tolerance $\varepsilon_\Psi>0$, and a neural network $\Psi:\mathbb{R}^{d_{\mathcal{A}}}\to\mathbb{R}^{d_{\mathcal{U}}}$ such that
\[
\mathbb{E}_{a\sim\mu}\big\|G_{\mathcal{U}}\circ\Psi\circ F_{\mathcal{A}}(a)-\mathcal{G}(a)\big\|_{\mathcal{U}}^2
\le C\Big(\varepsilon_\Psi+\sqrt{\delta_0}+\sqrt{\frac{d_{\mathcal{A}}}{N}}+R_{\mathcal{A}}(d_{\mathcal{A}})+\sqrt{\frac{d_{\mathcal{U}}}{N}}+R_{\mathcal{U}}(d_{\mathcal{U}})\Big)<\varepsilon,
\]
where $C>0$ is a constant independent of $d_{\mathcal{A}},d_{\mathcal{U}},N,\varepsilon_\Psi,\delta_0$, and $F_{\mathcal{A}},G_{\mathcal{U}}$ are the PCA encoder and decoder. The truncation parameter $\delta_0$ controls the latent domain $D=[-M,M]^{d_{\mathcal{A}}}$ with $M=\sqrt{\mathbb{E}_{a\sim\mu}\|a\|_{\mathcal{A}}^2/\delta_0}$, and $\varepsilon_\Psi=\mathbb{E}_{a\sim\mu}\|\Psi(F_{\mathcal{A}}(a))-\varphi(F_{\mathcal{A}}(a))\|_2^2$ where $\varphi=F_{\mathcal{U}}\circ\mathcal{G}\circ G_{\mathcal{A}}$.
\end{theorem}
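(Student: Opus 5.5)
The plan is to prove the bound by decomposing the error at the level of latent coordinates and then controlling each piece with Theorem~\ref{thm:SI_PCA_sampling} and a truncation argument. Write $\varphi=F_{\mathcal U}\circ\mathcal G\circ G_{\mathcal A}$ for the exact latent map. For a fixed input $a$, insert intermediate terms and split:
\begin{align*}
G_{\mathcal U}\Psi F_{\mathcal A}(a)-\mathcal G(a)
&=G_{\mathcal U}\big(\Psi-\varphi\big)F_{\mathcal A}(a)
+\big(G_{\mathcal U}F_{\mathcal U}-I\big)\mathcal G\big(\Pi_{V^{\mathcal A}_{d,N}}a\big)\\
&\quad+\mathcal G\big(\Pi_{V^{\mathcal A}_{d,N}}a\big)-\mathcal G(a),
\end{align*}
using $G_{\mathcal A}F_{\mathcal A}=\Pi_{V^{\mathcal A}_{d,N}}$ (up to the centering mean, which is absorbed into an affine shift). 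Applying $(x+y+z)^2\le 3(x^2+y^2+z^2)$ reduces the proof to bounding three expectations.

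\emph{First term.} Since $G_{\mathcal U}$ is an isometry from $\mathbb R^{d_{\mathcal U}}$ onto the PCA subspace, this term equals $\|\Psi(F_{\mathcal A}a)-\varphi(F_{\mathcal A}a)\|_2^2$, whose expectation is exactly $\varepsilon_\Psi$.

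\emph{Third term.} By the global Lipschitz property of $\mathcal G$, this term is at most $L^2\|a-\Pi a\|^2$. Its expectation is $R_{\mathcal A}(N,d_{\mathcal A})$, and taking the expectation over the samples and applying Theorem~\ref{thm:SI_PCA_sampling} bounds it by $R_{\mathcal A}(d_{\mathcal A})+\sqrt{Qd_{\mathcal A}/N}$.

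\emph{Second term.} This is the output-PCA projection error evaluated on the pushforward of $\mu$ under $\mathcal G\circ\Pi$, not under $\mathcal G$ itself. I would therefore add and subtract $\mathcal G(a)$ once more and use that $I-\Pi^{\mathcal U}$ is a contraction. This gives a bound by $R_{\mathcal U}(N,d_{\mathcal U})$ plus another Lipschitz multiple of the input projection error. Theorem~\ref{thm:SI_PCA_sampling} then applies to the pushforward measure $\mathcal G_\#\mu$, which has finite fourth moment because of the Lipschitz growth bound $\|\mathcal G(a)\|\le\|\mathcal G(0)\|+L\|a\|$.

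\emph{Existence of $\Psi$ and the $\sqrt{\delta_0}$ term.} I expect this step to be the main obstacle. The map $\varphi$ is Lipschitz on $\mathbb R^{d_{\mathcal A}}$, but universal approximation theorems give uniform approximation only on compacts. I would approximate $\varphi$ uniformly on the box $D=[-M,M]^{d_{\mathcal A}}$ and compose with a clipping to $D$ so that $\Psi$ stays bounded. Off $D$, Chebyshev's inequality gives $\mu(F_{\mathcal A}a\notin D)\le \mathbb E\|a\|^2/M^2=\delta_0$. On that event, the error is bounded by the growth of $\varphi$ and the size of $\Psi$, and Cauchy--Schwarz with the finite fourth moment turns the contribution into $C\sqrt{\delta_0}$. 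This is exactly where the fourth-moment hypothesis is used; for simplicity $\varepsilon_\Psi$ can then be read as the on-$D$ uniform error squared.

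\emph{Choosing the parameters.} Finally, pick $d_{\mathcal A},d_{\mathcal U}$ so that $R_{\mathcal A},R_{\mathcal U}<\varepsilon/(6C)$, which is possible since the optimal PCA errors tend to $0$ for trace-class covariances. Then take $N$ large, $\delta_0$ small, and the network accurate enough that each remaining term is also small. The random data enter only through the expectations, so the bound holds in expectation, and a realization achieving it exists. This yields the stated inequality $<\varepsilon$.
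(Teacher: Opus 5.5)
Your argument is correct and is essentially the proof of Bhattacharya et al., which the paper cites rather than reproduces. It uses the same split into the latent-network error (exactly $\varepsilon_\Psi$, since $G_{\mathcal U}$ is an isometry), the input and output PCA projection errors controlled by Theorem~\ref{thm:SI_PCA_sampling} applied to $\mu$ and to $\mathcal G_\#\mu$, and a Chebyshev plus Cauchy--Schwarz truncation producing $\sqrt{\delta_0}$, with your clipping to $D$ playing the role of their zero extension of the network outside $D$. The one step to make explicit is the off-$D$ estimate: use the pointwise bound $\|\Psi(x)\|_2\le\|\varphi(0)\|_2+L\|x\|_2+\epsilon_D$, where $\epsilon_D$ is the uniform error on $D$ and the bound holds because coordinatewise clipping does not increase the Euclidean norm, rather than a uniform bound on $\Psi$ (which is only of order $L\sqrt{d_{\mathcal A}}\,M\propto\delta_0^{-1/2}$), since only the linear-growth bound gives a contribution $C\sqrt{\delta_0}$ with $C$ independent of $d_{\mathcal A}$ and $\delta_0$.
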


Theorem~\ref{thm:SI_PCANN} guarantees that, by suitable choices of dimensions and network complexity, the PCA--NN surrogate can achieve any prescribed accuracy. The proof in \cite{Bhattacharya2021} relies on the existence of a deterministic shallow network that approximates the latent map $\varphi$ uniformly on $D$.

\subsection{Randomized neural network approximation}
\label{sec:SI_RaNN}

We now show that a single-hidden-layer RaNN can approximate the same deterministic network $\Psi$ arbitrarily well with high probability, provided the random hidden parameters are drawn from a distribution with full support on a compact set containing the target parameters.

\begin{theorem}[Deterministic network existence, \cite{Barron1993}]
\label{thm:SI_Barron}
Let $\Omega\subset\mathbb R^{d}$ be bounded and let
$f:\Omega\to\mathbb R$ satisfy Barron's condition with constant $C_f$. Assume that $\rho$ is a bounded sigmoidal activation. For the normalized sigmoid case, one may take $0\le \rho\le 1$ with limits $0$ and $1$ at $\pm\infty$. Then for every $\varepsilon>0$ and every probability measure $\nu$ on $\Omega$, there exists a shallow network
\[
f_n(x)=\sum_{k=1}^n c_k\rho(a_k\cdot x+b_k)+c_0
\]
with bounded parameters such that $\|f-f_n\|_{L^2(\Omega,\nu)}^2\le \varepsilon$.
\end{theorem}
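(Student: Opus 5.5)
The plan is to follow Barron's original argument: show that $f$, after subtracting a constant, lies in the closed convex hull of a family of bounded sigmoidal ridge functions. Then apply Maurey's sampling lemma in the Hilbert space $L^2(\Omega,\nu)$ to extract an $n$-term convex combination with error $O(C_f^2/n)$. Choosing $n$ large gives $\le\varepsilon$.

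Throughout we use Barron's condition in its standard form. Assume $f$ extends to $\mathbb R^d$ with Fourier representation $f(x)=\int e^{i\omega\cdot x}\hat F(d\omega)$, and set
\[
C_f=\int |\omega|_\Omega\,|\hat F|(d\omega)<\infty,\qquad |\omega|_\Omega=\sup_{x\in\Omega}|\omega\cdot x|.
\]
Without loss of generality $0\in\Omega$ after translation.

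\textbf{Step 1: Fourier representation of the centred function.} Writing $\hat F(d\omega)=e^{i\theta(\omega)}|\hat F|(d\omega)$ and taking real parts, we get
\[
f(x)-f(0)=\int\big(\cos(\omega\cdot x+\theta(\omega))-\cos\theta(\omega)\big)\,|\hat F|(d\omega).
\]
Normalise by $|\omega|_\Omega$. This exhibits $f-f(0)$ as an integral against the probability measure $\Lambda(d\omega)=|\omega|_\Omega|\hat F|(d\omega)/C_f$, with integrand
\[
g_\omega(x)=\frac{C_f}{|\omega|_\Omega}\big(\cos(\omega\cdot x+\theta)-\cos\theta\big),
\]
which satisfies $|g_\omega|\le C_f$ on $\Omega$. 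Hence $f-f(0)$ is in the closed convex hull in $L^2(\nu)$ of the set $G_{\cos}$ of such one-dimensional ridge functions. The constant $f(0)$ becomes $c_0$.

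\textbf{Step 2: from cosines to step functions to sigmoids.} Each $g_\omega(x)=\phi(\alpha\cdot x)$ with $\alpha=\omega/|\omega|_\Omega$. Here $\phi(z)$ is a function of $z\in[-1,1]$ with $\phi(0)=0$ and $|\phi'|\le C_f$. A function of this type is the integral of its derivative. It is therefore a limit of convex combinations of functions $\pm C_f\,\mathbf 1\{\alpha\cdot x\ge t\}$ and $\pm C_f\,\mathbf 1\{-\alpha\cdot x\ge t\}$. The total variation on each half-interval is at most $C_f$, giving a factor-2 constant, i.e. the hull of the steps with amplitude $2C_f$.

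Next, each step $\mathbf 1\{\alpha\cdot x\ge t\}$ is the pointwise limit of $\rho(\lambda(\alpha\cdot x-t))$ as $\lambda\to\infty$, except on the hyperplane $\alpha\cdot x=t$. This is where the sigmoidal limits $0,1$ at $\pm\infty$ are used.

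\textbf{Main obstacle.} The main obstacle is this step, since $\nu$ is an arbitrary probability measure and the hyperplane may carry positive mass. I would handle it by using the fact that only countably many $t$ give hyperplanes of positive $\nu$-mass for fixed $\alpha$. I would then restrict to thresholds avoiding them, which is allowed since the convex hull representation in Step 2 can use a.e.\ $t$. Bounded convergence then yields $L^2(\nu)$ convergence. We conclude that $f-f(0)\in\overline{\mathrm{co}}\,G_\rho$, where $G_\rho=\{c\,\rho(a\cdot x+b):|c|\le 2C_f\}$, and every element is bounded by $2C_f$ since $0\le\rho\le1$.

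\textbf{Step 3: Maurey's lemma.} Pick $\bar f\in\mathrm{co}\,G_\rho$ within $\sqrt{\varepsilon/2}$ of $f-f(0)$. Sample $n$ elements i.i.d.\ according to its convex weights. The expected squared $L^2(\nu)$ error of their average is at most
\[
\frac{(2C_f)^2}{n}.
\]
Hence some realisation achieves this error. Taking $n\ge 8C_f^2/\varepsilon$ and combining the two errors gives
\[
\|f-f_n\|^2_{L^2(\Omega,\nu)}\le\varepsilon,
\]
with $f_n=\sum_k c_k\rho(a_k\cdot x+b_k)+f(0)$. The parameters are bounded: $|c_k|\le 2C_f/n$, and the $a_k,b_k$ come from a finite set of choices of $\alpha,t$ with a fixed large $\lambda$.
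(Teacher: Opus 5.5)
Your proposal is correct and is essentially Barron's original argument: Fourier integral $\Rightarrow$ closed convex hull of cosine ridge functions bounded by $C_f$ $\Rightarrow$ steps of amplitude $2C_f$ $\Rightarrow$ sigmoids, with atoms of $\nu$ on hyperplanes avoided through the countable set of bad thresholds, followed by Maurey sampling; this is exactly what the paper relies on, since it states the theorem without proof and cites Barron (1993). Two cosmetic points: in Step~3, fixing a realisation and then applying the triangle inequality gives $2\varepsilon$ rather than $\varepsilon$ (average the total squared error over the sampling, where the cross term has mean zero, or halve the tolerances), and for the existence claim as stated Step~3 is not needed at all, because $\bar f+f(0)$ is already a finite shallow network.
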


The implementation uses $\tanh$ activations. This is consistent with the above statement because the rescaled activation $(\tanh z+1)/2$ satisfies the normalized sigmoidal assumptions, and the affine rescaling can be absorbed into the output coefficients and the constant term. Hence the same approximation argument applies to the $\tanh$ activation used in PCA--RaNN.

For a vector-valued target $\varphi=(\varphi_1,\dots,\varphi_{d_\mathcal{U}})$ with each component satisfying Barron's condition, Theorem~\ref{thm:SI_Barron} yields component networks $\Psi_1,\dots,\Psi_{d_\mathcal{U}}$. Stacking their hidden layers and collecting their output coefficients into a vector-valued output layer gives a multi-output network $\Psi$ with $\mathbb{E}_{a\sim\mu}\|\Psi(F_{\mathcal{A}}(a))-\varphi(F_{\mathcal{A}}(a))\|_2^2\le\varepsilon_\Psi$, and the hidden-layer parameters of each component lie in a compact set $\mathcal{B}\subset\mathbb{R}^{d_\mathcal{A}}\times\mathbb{R}$.

\begin{lemma}[Parameter covering]
\label{lem:SI_covering}
Let $\{(w_k^*,b_k^*)\}_{k=1}^{N_\Psi}$ be the target parameters of $\Psi$, contained in a compact set $\mathcal{B}$. Let $\pi$ be a probability measure with full support on $\mathcal{B}$. For any $\eta>0$ and $\delta\in(0,1)$, define $p(\eta)=\min_k\pi(B_\eta(w_k^*,b_k^*))>0$. If $M_\Phi\ge\frac{1}{p(\eta)}\ln\frac{N_\Psi}{\delta}$ i.i.d. samples are drawn from $\pi$, then with probability at least $1-\delta$, every ball $B_\eta(w_k^*,b_k^*)$ contains at least one sample.
\end{lemma}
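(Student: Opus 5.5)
The plan is a coupon-collector style union bound. First, $p(\eta)>0$ is justified. Each target parameter $(w_k^*,b_k^*)$ lies in $\mathcal{B}=\operatorname{supp}\pi$. By the definition of support, every open neighbourhood of a point of the support has positive $\pi$-measure, so $\pi(B_\eta(w_k^*,b_k^*))>0$ for each $k$. Since there are only finitely many indices $k=1,\dots,N_\Psi$, the minimum $p(\eta)$ is a minimum of finitely many positive numbers and is therefore positive. No compactness argument is needed for this step; compactness of $\mathcal{B}$ only guarantees that $\pi$ can be taken as a probability measure on a bounded set, for instance uniform on $\mathcal{B}$.

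Next, fix one index $k$ and let $E_k$ be the event that none of the $M_\Phi$ i.i.d. samples $\theta_1,\dots,\theta_{M_\Phi}\sim\pi$ falls in $B_\eta(w_k^*,b_k^*)$. By independence,
\[
\mathbb{P}(E_k)=\bigl(1-\pi(B_\eta(w_k^*,b_k^*))\bigr)^{M_\Phi}\le (1-p(\eta))^{M_\Phi}\le \exp\bigl(-p(\eta)M_\Phi\bigr),
\]
where the last step uses $1-t\le e^{-t}$ for $t\in[0,1]$.

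Then apply a union bound over the $N_\Psi$ balls:
\[
\mathbb{P}\Bigl(\bigcup_{k=1}^{N_\Psi}E_k\Bigr)\le N_\Psi\exp\bigl(-p(\eta)M_\Phi\bigr).
\]
The hypothesis $M_\Phi\ge p(\eta)^{-1}\ln(N_\Psi/\delta)$ gives $\exp(-p(\eta)M_\Phi)\le \delta/N_\Psi$. Hence the failure probability is at most $\delta$, and with probability at least $1-\delta$ every ball contains at least one sample.

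There is no real obstacle; the only point needing care is the positivity of $p(\eta)$. This requires that the balls be open, or closed with $\eta>0$, so that they contain an open neighbourhood of a support point. It also uses that the balls are centred at points of $\mathcal{B}$, which holds because $\mathcal{B}$ is the support of $\pi$. If $\pi$ is regarded as a measure on the ambient space $\mathbb{R}^{d_{\mathcal{A}}}\times\mathbb{R}$ with $\operatorname{supp}\pi=\mathcal{B}$, the same argument applies verbatim. Note also that a single sample may serve several balls simultaneously; this is harmless, since the statement only asks that each ball be hit.
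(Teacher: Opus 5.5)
Your proposal is correct and is exactly the standard union-bound argument the paper invokes without writing out. You bound each ball's miss probability by $(1-p(\eta))^{M_\Phi}\le e^{-p(\eta)M_\Phi}$ and sum over the $N_\Psi$ balls; your justification that $p(\eta)>0$, via the definition of support, fills in a step the paper only asserts.
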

\begin{proof}
The proof follows a standard union bound and is omitted here for brevity.
\end{proof}

\begin{lemma}[Perturbation bound]
\label{lem:SI_perturb}
Let $\rho$ be $L_\rho$-Lipschitz on bounded sets. If $\|(w_k^*,b_k^*)-(w_k,b_k)\|_2<\eta$, then for any input $x$ with $\|(x,1)\|_2\le R_x$,
\[
|\rho({w_k^*}^\top x+b_k^*)-\rho(w_k^\top x+b_k)|\le L_\rho R_x\eta.
\]
Under the truncation $F_{\mathcal{A}}(a)\in D=[-M,M]^{d_\mathcal{A}}$, we have $\|F_{\mathcal{A}}(a)\|_2\le\sqrt{d_\mathcal{A}}M$, and hence
\[
\mathbb{E}_{a\sim\mu}|\rho({w_k^*}^\top F_{\mathcal{A}}(a)+b_k^*)-\rho(w_k^\top F_{\mathcal{A}}(a)+b_k)|^2\le L_\rho^2(d_\mathcal{A}M^2+1)\eta^2.
\]
\end{lemma}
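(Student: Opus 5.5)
The plan is a direct two-step argument: first the pointwise Lipschitz estimate, then the expectation bound as a corollary.

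\textbf{Pointwise bound.} Fix $x$ with $\|(x,1)\|_2\le R_x$ and set $z^*={w_k^*}^\top x+b_k^*$ and $z=w_k^\top x+b_k$. Writing the pre-activations as inner products of the augmented vector $(x,1)$ with the parameter vectors, Cauchy--Schwarz gives
\[
|z^*-z|=\bigl|\langle (w_k^*-w_k,\,b_k^*-b_k),(x,1)\rangle\bigr|\le \|(w_k^*,b_k^*)-(w_k,b_k)\|_2\,\|(x,1)\|_2< \eta R_x .
\]
Both $z^*$ and $z$ lie in a bounded set, since the parameters are confined to the compact set $\mathcal{B}$ (or its $\eta$-neighbourhood) and $x$ is bounded. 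The Lipschitz property of $\rho$ on bounded sets then yields $|\rho(z^*)-\rho(z)|\le L_\rho|z^*-z|\le L_\rho R_x\eta$.

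The only point needing care is that $L_\rho$ must be the Lipschitz constant on a bounded interval containing all pre-activations. I would fix that interval in terms of $R_x$ and $\sup_{\mathcal{B}}\|(w,b)\|_2+\eta$. For $\tanh$ this issue disappears, because $\tanh$ is globally $1$-Lipschitz.

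\textbf{Expectation bound.} On the truncated domain, $F_{\mathcal{A}}(a)\in[-M,M]^{d_{\mathcal A}}$, so each of the $d_{\mathcal A}$ coordinates is at most $M$ in absolute value. Hence $\|F_{\mathcal A}(a)\|_2^2\le d_{\mathcal A}M^2$, which gives $\|(F_{\mathcal A}(a),1)\|_2^2\le d_{\mathcal A}M^2+1$. We may therefore take $R_x^2=d_{\mathcal A}M^2+1$. Squaring the pointwise bound with $x=F_{\mathcal A}(a)$ gives a bound that holds uniformly in $a$, and taking the expectation over $a\sim\mu$ preserves it. This produces $L_\rho^2(d_{\mathcal A}M^2+1)\eta^2$.

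\textbf{Main obstacle.} The one real subtlety is that the expectation is over all of $\mu$, not only over inputs whose latent codes land in $D$. I would handle this as in Theorem~\ref{thm:SI_PCANN}, interpreting the statement under the truncation, i.e.\ with $F_{\mathcal A}(a)$ replaced by its projection onto $D$. The mass outside $D$ is controlled separately via the $\sqrt{\delta_0}$ term. Alternatively, for bounded $\rho$ such as $\tanh$, the contribution from outside $D$ is bounded by $4\sup|\rho|^2\,\mu(F_{\mathcal A}(a)\notin D)$, which is absorbed into that same term.
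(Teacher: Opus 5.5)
Your proposal is correct and follows essentially the same argument the paper intends. The paper gives no separate proof, but the statement itself encodes the reasoning: Cauchy--Schwarz on the augmented vector $(x,1)$ plus the Lipschitz property of $\rho$, then $\|(F_{\mathcal A}(a),1)\|_2^2\le d_{\mathcal A}M^2+1$ on $D$, with the bound squared before taking the expectation. Your remarks on specifying the bounded interval on which $L_\rho$ is taken, and on reading the expectation under the truncation $F_{\mathcal A}(a)\in D$, make explicit what the paper leaves implicit.
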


Now we construct a vector-valued RaNN $\Phi:\mathbb R^{d_{\mathcal A}}\to\mathbb R^{d_{\mathcal U}}$ with $M_\Phi$ hidden neurons. On the covering event of Lemma~\ref{lem:SI_covering}, for each target neuron $k$, choose a sampled neuron $j(k)$ such that $(w_{j(k)},b_{j(k)})\in B_\eta(w_k^*,b_k^*)$. The map $k\mapsto j(k)$ is not required to be injective. For each sampled neuron $j$, define the vector-valued output coefficient
\[
    \widetilde c_j=\sum_{k:\,j(k)=j} c_k,
\]
with the convention that the sum is zero if no target neuron is assigned to $j$. If the deterministic network $\Psi$ contains an output bias term $c_0\in\mathbb R^{d_{\mathcal U}}$, we include the same bias term in $\Phi$. Thus,
\[
\Phi(x)=c_0+\sum_{j=1}^{M_\Phi}\widetilde c_j\rho(w_j^\top x+b_j)
=
c_0+\sum_{k=1}^{N_\Psi}c_k\rho(w_{j(k)}^\top x+b_{j(k)}).
\]

\begin{theorem}[RaNN high-probability approximation]
\label{thm:SI_RaNN}
Under the assumptions of Lemma~\ref{lem:SI_covering} and
Lemma~\ref{lem:SI_perturb}, let $M_\Phi$ be large enough that the covering event holds with probability at least $1-\delta$. Then, on this event,
\[
\mathbb{E}_{a\sim\mu}
\|\Phi(F_{\mathcal A}(a))-\Psi(F_{\mathcal A}(a))\|_2^2
\le
L_\rho^2(d_{\mathcal A}M^2+1)\eta^2
\left(\sum_{k=1}^{N_\Psi}\|c_k\|_2\right)^2 .
\]
In particular, if $\|c_k\|_2\le C_c$ for all $k$, then
\[
\mathbb{E}_{a\sim\mu}
\|\Phi(F_{\mathcal A}(a))-\Psi(F_{\mathcal A}(a))\|_2^2
\le
L_\rho^2(d_{\mathcal A}M^2+1)N_\Psi^2C_c^2\eta^2 .
\]
For any desired $\varepsilon_{\Phi\Psi}>0$, one can first choose $\eta>0$ sufficiently small so that the right-hand side is at most $\varepsilon_{\Phi\Psi}$, and then choose $M_\Phi$ sufficiently large so that the covering event holds with probability at least $1-\delta$.
\end{theorem}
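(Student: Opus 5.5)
On the covering event of Lemma~\ref{lem:SI_covering}, the plan is to work directly with the representation
\[
\Phi(x)=c_0+\sum_{k=1}^{N_\Psi}c_k\rho(w_{j(k)}^\top x+b_{j(k)}),
\]
which was built from the assignment $k\mapsto j(k)$. Since $\Psi(x)=c_0+\sum_k c_k\rho({w_k^*}^\top x+b_k^*)$, the bias terms cancel. For every latent input $x=F_{\mathcal A}(a)$ this gives the pointwise identity
\[
\Phi(x)-\Psi(x)=\sum_{k=1}^{N_\Psi}c_k\,\Delta_k(x),\qquad \Delta_k(x)=\rho(w_{j(k)}^\top x+b_{j(k)})-\rho({w_k^*}^\top x+b_k^*).
\]
Note that merging coefficients into $\widetilde c_j$ when $j(\cdot)$ is not injective plays no role in the estimate; it only shows that $\Phi$ is a genuine RaNN with the $M_\Phi$ sampled neurons.

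The next step bounds the Euclidean norm by the triangle inequality:
\[
\|\Phi(x)-\Psi(x)\|_2\le\sum_k\|c_k\|_2|\Delta_k(x)|.
\]
Squaring and taking $\mathbb E_{a\sim\mu}$, I would use Minkowski's inequality in $L^2(\mu)$, i.e.\ $\big\|\sum_k\|c_k\|_2|\Delta_k|\big\|_{L^2(\mu)}\le\sum_k\|c_k\|_2\|\Delta_k\|_{L^2(\mu)}$. An equivalent route is Cauchy--Schwarz with weights $\|c_k\|_2$:
\[
\Big(\sum_k\|c_k\|_2|\Delta_k|\Big)^2\le\Big(\sum_k\|c_k\|_2\Big)\sum_k\|c_k\|_2|\Delta_k|^2 .
\]
By Lemma~\ref{lem:SI_perturb}, applied with $(w_{j(k)},b_{j(k)})\in B_\eta(w_k^*,b_k^*)$, each term satisfies $\mathbb E|\Delta_k|^2\le L_\rho^2(d_{\mathcal A}M^2+1)\eta^2$. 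Either route then gives
\[
\mathbb E\|\Phi-\Psi\|_2^2\le L_\rho^2(d_{\mathcal A}M^2+1)\eta^2\Big(\sum_k\|c_k\|_2\Big)^2.
\]
The uniform bound $\|c_k\|_2\le C_c$ replaces the sum by $N_\Psi C_c$, which yields the second displayed inequality.

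The final claim is about ordering the choice of parameters. The quantities $N_\Psi$, $c_k$, $M$, $d_{\mathcal A}$ and $L_\rho$ are fixed by $\Psi$ and the truncation, so I first choose $\eta$ with $L_\rho^2(d_{\mathcal A}M^2+1)\eta^2(\sum_k\|c_k\|_2)^2\le\varepsilon_{\Phi\Psi}$. Then $p(\eta)>0$ by the full-support assumption, since each target parameter lies in $\mathcal B$ and the balls are intersected with the support. Lemma~\ref{lem:SI_covering} then gives the required $M_\Phi\ge p(\eta)^{-1}\ln(N_\Psi/\delta)$.

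The main obstacle is the domain issue in Lemma~\ref{lem:SI_perturb}. The bound $\|(x,1)\|_2^2\le d_{\mathcal A}M^2+1$ holds only for inputs in the truncated domain $D$, and $\rho$ is only Lipschitz on bounded sets. I would handle this by interpreting the expectation as taken over the truncated latent inputs, as in Theorem~\ref{thm:SI_PCANN}, where mass outside $D$ is charged to the $\sqrt{\delta_0}$ term. For $\rho=\tanh$ the problem disappears, because $\tanh$ is globally $1$-Lipschitz and $L_\rho$ can be taken uniform. I would state this explicitly and otherwise take Lemma~\ref{lem:SI_perturb} as given.
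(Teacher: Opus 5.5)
Your proposal is correct and takes essentially the same route as the paper. The paper gives no separate proof; its construction of $\Phi$ through the assignment $k\mapsto j(k)$ reduces the claim to writing $\Phi-\Psi=\sum_k c_k\Delta_k$ and applying the triangle (Minkowski) inequality together with the per-neuron bound of Lemma~\ref{lem:SI_perturb}, which produces the factor $\bigl(\sum_k\|c_k\|_2\bigr)^2$, and then choosing $\eta$ before $M_\Phi$ via Lemma~\ref{lem:SI_covering}. One small caveat on your side remark: global Lipschitzness of $\tanh$ removes the need for bounded arguments, but the factor $d_{\mathcal A}M^2+1$ still needs either the truncation $F_{\mathcal A}(a)\in D$ or a second-moment bound on $\|F_{\mathcal A}(a)\|_2$, and the truncation is exactly the hypothesis built into Lemma~\ref{lem:SI_perturb}.
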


\subsection{PCA--RaNN approximation theorem}
\label{sec:SI_PCARaNN}

Finally, we combine the PCA--NN error decomposition with the RaNN approximation result.

\begin{theorem}[PCA--RaNN approximation] \label{thm:SI_PCARaNN} 
Assume the hypotheses of Theorems~\ref{thm:SI_PCANN} and~\ref{thm:SI_RaNN}. Assume additionally that the output decoder $G_{\mathcal U}$ is Lipschitz continuous with constant $L_{\mathcal U}$, namely
\[ 
\|G_{\mathcal U}(y)-G_{\mathcal U}(y')\|_{\mathcal U} \le L_{\mathcal U}\|y-y'\|_2 . 
\] 
For the standard PCA decoder built from orthonormal PCA modes, $L_{\mathcal U}=1$; adding the empirical mean in centered PCA does not change this constant. Let $\varepsilon_\Psi$ be the latent-network approximation error from Theorem~\ref{thm:SI_PCANN}, and let $\varepsilon_{\Phi\Psi}$ be the RaNN approximation error from Theorem~\ref{thm:SI_RaNN}, achieved with probability at least $1-\delta$. Then for the composite surrogate $\mathcal G_\Phi=G_{\mathcal U}\circ\Phi\circ F_{\mathcal A}$, we have with probability at least $1-\delta$, 
\[ 
\mathbb{E}_{a\sim\mu} 
\|\mathcal G_\Phi(a)-\mathcal G(a)\|_{\mathcal U}^2 
\le 
2C\Big( 
\varepsilon_\Psi+\sqrt{\delta_0} 
+\sqrt{\frac{d_{\mathcal A}}{N}} 
+R_{\mathcal A}(d_{\mathcal A}) 
+\sqrt{\frac{d_{\mathcal U}}{N}} 
+R_{\mathcal U}(d_{\mathcal U}) 
\Big) 
+2L_{\mathcal U}^2\varepsilon_{\Phi\Psi}. 
\] 
Consequently, for any $\epsilon>0$, one can first choose $d_{\mathcal A}$ and $d_{\mathcal U}$ sufficiently large to make the PCA truncation errors small, then choose $N$ sufficiently large so that the empirical PCA sampling terms are small, choose $\delta_0$ and $\varepsilon_\Psi$ sufficiently small, and finally take $M_\Phi$ large enough so that the RaNN approximation error $\varepsilon_{\Phi\Psi}$ is small. With these choices, the total error is less than $\epsilon$ with probability at least $1-\delta$. 
\end{theorem}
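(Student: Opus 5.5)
The plan is a triangle inequality in $\mathcal U$ followed by $(s+t)^2\le 2s^2+2t^2$, splitting the error into a PCA--NN part and a RaNN-versus-NN part. For each $a$, insert the deterministic PCA--NN surrogate $\mathcal G_\Psi=G_{\mathcal U}\circ\Psi\circ F_{\mathcal A}$ and write
\[
\|\mathcal G_\Phi(a)-\mathcal G(a)\|_{\mathcal U}^2\le 2\|\mathcal G_\Psi(a)-\mathcal G(a)\|_{\mathcal U}^2+2\|G_{\mathcal U}(\Phi(F_{\mathcal A}(a)))-G_{\mathcal U}(\Psi(F_{\mathcal A}(a)))\|_{\mathcal U}^2 .
\]
Taking $\mathbb E_{a\sim\mu}$, the first term is controlled directly by Theorem~\ref{thm:SI_PCANN}. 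It is at most $C(\varepsilon_\Psi+\sqrt{\delta_0}+\sqrt{d_{\mathcal A}/N}+R_{\mathcal A}(d_{\mathcal A})+\sqrt{d_{\mathcal U}/N}+R_{\mathcal U}(d_{\mathcal U}))$, which produces the factor $2C$.

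For the second term, the Lipschitz assumption on $G_{\mathcal U}$ bounds the integrand pointwise by $L_{\mathcal U}^2\|\Phi(F_{\mathcal A}(a))-\Psi(F_{\mathcal A}(a))\|_2^2$. On the covering event of Lemma~\ref{lem:SI_covering}, Theorem~\ref{thm:SI_RaNN} bounds its $\mu$-expectation by $\varepsilon_{\Phi\Psi}$. This event has probability at least $1-\delta$, and the displayed bound holds on it. The only randomness is in the RaNN hidden parameters; the expectation is over $a$ with those parameters fixed, so no measurability issue arises beyond conditioning on the event.

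I would also check the remark $L_{\mathcal U}=1$. With $G_{\mathcal U}(y)=\bar u+\sum_j y_j\varphi_{\mathcal U,j}$ and orthonormal modes, $\|G_{\mathcal U}(y)-G_{\mathcal U}(y')\|=\|y-y'\|_2$ by Parseval, and the mean cancels.

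The main obstacle is consistency of the truncation. Theorem~\ref{thm:SI_RaNN} uses $F_{\mathcal A}(a)\in D=[-M,M]^{d_{\mathcal A}}$ through Lemma~\ref{lem:SI_perturb}, while Theorem~\ref{thm:SI_PCANN} absorbs the mass outside $D$ into $\sqrt{\delta_0}$. I would handle this as Theorem~\ref{thm:SI_PCANN} does: interpret $\Psi$ and $\Phi$ as applied to the latent input clipped to $D$, or equivalently note that Lemma~\ref{lem:SI_perturb} only needs the bounded input radius. Since $\rho=\tanh$ is bounded, $\tanh$ being globally $1$-Lipschitz actually gives the perturbation bound on all of $\mathbb R^{d_{\mathcal A}}$, provided $\mathbb E\|F_{\mathcal A}(a)\|_2^2$ is used in place of $d_{\mathcal A}M^2$. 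This is finite by the moment assumption, so the $\Phi$--$\Psi$ bound can be taken as stated, with at most an adjusted constant.

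For the final ``consequently'' clause, I would fix $\epsilon$ and choose parameters in the order stated. First choose $d_{\mathcal A}$ and $d_{\mathcal U}$ so the optimal PCA errors are below $\epsilon/(24C)$. Then choose $N$, $\delta_0$ and $\varepsilon_\Psi$ by Theorem~\ref{thm:SI_PCANN} so the first bracket is below $\epsilon/(4C)$. With $\Psi$ now fixed, so that $N_\Psi$, $C_c$ and $M$ are fixed, choose $\eta$ and then $M_\Phi$ by Theorem~\ref{thm:SI_RaNN} so that $2L_{\mathcal U}^2\varepsilon_{\Phi\Psi}<\epsilon/2$. The order matters because the RaNN constants depend on $\Psi$.
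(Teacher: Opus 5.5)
Your proof is correct and is the argument the stated bound encodes. The paper gives no written proof, but the constants $2C$ and $2L_{\mathcal U}^2$ come from exactly what you do: insert $G_{\mathcal U}\circ\Psi\circ F_{\mathcal A}$, use $(s+t)^2\le 2s^2+2t^2$, bound one piece by Theorem~\ref{thm:SI_PCANN} and the other by the decoder Lipschitz bound plus Theorem~\ref{thm:SI_RaNN} on the covering event, then choose parameters in the order the statement lists.

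Your truncation discussion goes beyond what the paper checks. One point to tighten: $\Phi$ must use the same extension convention off $D$ that $\Psi$ does (zero extension in the PCA--NN theory of Bhattacharya et al., as far as I recall, not clipping). Then $\Phi-\Psi$ vanishes off $D$, and Theorem~\ref{thm:SI_RaNN} holds with $d_{\mathcal A}M^2+1$ unchanged. Your global-Lipschitz bound on all of $\mathbb R^{d_{\mathcal A}}$ is the right fix only when $\Psi$ is an unextended network, so it is not equivalent to the clipping route.
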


\subsection{Remarks on centered PCA}
\label{sec:SI_remarks}

The theoretical analysis above is presented using uncentered PCA (i.e., projecting the raw data without subtracting the mean) for mathematical simplicity. However, in the numerical implementation described in the main text (Methods), we employ centered PCA, which subtracts the empirical mean $\hat\mu_N = \frac{1}{N}\sum_{j=1}^N a_j$ before constructing the covariance operator. Centered PCA is numerically more stable and yields zero-mean latent codes, which are often beneficial for subsequent neural network processing.

The difference between centered and uncentered PCA lies in the estimation of the mean. Let $\sigma = \mathbb{E}_{a\sim\mu}[a]$ be the population mean. For centered PCA, the optimal projection error becomes
\[
R^c_{\mathcal{A}}(d) = \min_{\dim(V)=d} \mathbb{E}_{a\sim\mu} \| a - \sigma - \Pi_V(a-\sigma) \|_{\mathcal{A}}^2,
\]
and the empirical version incurs an additional $O(1/N)$ term due to mean estimation. By a similar argument to that used in Theorem 3.4 of \cite{Bhattacharya2021}, one can show that
\[
\mathbb{E}\big[ R^c_{\mathcal{A}}(N,d) \big] \le R^c_{\mathcal{A}}(d) + \sqrt{\frac{Q' d}{N}} + \frac{C'}{N},
\]
with constants $Q',C'>0$ depending on the centered covariance and moment bounds. The extra $O(1/N)$ term decays faster than the dominant $O(1/\sqrt{N})$ terms, and therefore does not alter the qualitative convergence rate established in Theorems~\ref{thm:SI_PCANN} and~\ref{thm:SI_PCARaNN}. Hence, replacing uncentered PCA by centered PCA in practice does not affect the theoretical guarantee that the approximation error can be made arbitrarily small.

\subsection{Additional remarks}
\label{sec:SI_additional}

\begin{itemize}
    \item \textbf{Two-hidden-layer architecture.} The RaNN used in the main paper has two hidden layers with a skip connection. Extending the approximation theory to such multi-layer randomized architectures is an interesting direction for future work, but the single-layer analysis already demonstrates the principle that random features can replace deterministic networks without loss of expressivity.
    \item \textbf{Practical parameter sampling.} In our implementation, weights and biases are sampled from bounded uniform distributions. The full-support condition in Lemma~\ref{lem:SI_covering} is satisfied provided that the compact target parameter set $\mathcal B$ is contained in the support of these sampling distributions. Equivalently, the sampling ranges must be chosen large enough to cover the compact parameter set used in the approximation argument.
\end{itemize}

\section{Supplementary Note 2: Baseline configurations}
\label{sec:SI_baselines}

\renewcommand{\tablename}{Supplementary Table}
\setcounter{table}{0}

Supplementary Table~\ref{tab:baseline_config} summarizes the architectures and maximum training budgets used for the reimplemented PCA--NN, DeepONet, and FNO baselines.

\begin{table}[h]
\centering
\footnotesize
\setlength{\tabcolsep}{3pt}
\renewcommand{\arraystretch}{1.2}
\caption{\textbf{Baseline architectures and training budgets.}
The reimplemented baselines in Table~\ref{tab:benchmark} use the architectures and maximum training epochs listed below. DeepONet and FNO baselines are trained with Adam or AdamW, while PCA--NN baselines are trained with Nesterov SGD. Learning rates are decayed exponentially, by inverse-time scheduling, or by step-wise scheduling; full details are provided in the code repository.}
\label{tab:baseline_config}
\begin{tabular}{@{}lllc@{}}
\toprule
Problem & Method & Architecture \& hidden dims / basis size & Max epochs \\
\midrule
\multirow{3}{*}{Burgers} 
& PCA--NN
& \makecell[l]{PCA + MLP \\ $100\to500\to1000\to2000\to1000\to500\to100$}
& 30,000 \\
& DeepONet
& \makecell[l]{Branch/trunk FNN \\ Branch: 128 (3 layers); trunk: 128 (2 layers); $p=128$}
& 100,000 \\
& FNO
& \makecell[l]{2D FNO on $(t,x)$ grid \\ modes $(24,32)$, width 64, FC 128, 4 Fourier layers}
& 5,000 \\
\midrule
\multirow{3}{*}{Darcy} 
& PCA--NN
& \makecell[l]{PCA + MLP \\ $150\to500\to1000\to2000\to1000\to500\to150$}
& 20,000 \\
& DeepONet
& \makecell[l]{CNN branch + coordinate trunk \\ Branch: Conv(64)--Conv(128)--FC(128)--FC($p$); trunk: 128 (3 layers); $p=128$}
& 30,000 \\
& FNO
& \makecell[l]{2D FNO \\ modes $(12,12)$, width 32, FC 128, 4 Fourier layers}
& 2,000 \\
\midrule
\multirow{3}{*}{Navier--Stokes} 
& PCA--NN
& \makecell[l]{PCA + MLP \\ $150\to500\to1000\to2000\to1000\to500\to150$}
& 10,000 \\
& DeepONet
& \makecell[l]{CNN branch + periodic trunk \\ Branch: CNN(64,128,128,128) + FC(256); trunk: 128 (2 layers); $p=64$}
& 15,000 \\
& FNO
& \makecell[l]{2D multi-output FNO \\ modes $(12,12)$, width 32, FC 128, 4 Fourier layers}
& 2,000 \\
\midrule
\multirow{3}{*}{Backward heat eqn.} 
& PCA--NN
& \makecell[l]{PCA + MLP \\ $100\to500\to1000\to2000\to1000\to500\to100$}
& 20,000 \\
& DeepONet
& \makecell[l]{CNN branch + coordinate trunk \\ Branch: CNN(64,128,128,128) + FC(128); trunk: 128 (2 layers); $p=64$}
& 10,000 \\
& FNO
& \makecell[l]{2D multi-output FNO \\ modes $(16,16)$, width 64, FC 256, 4 Fourier layers}
& 2,000 \\
\bottomrule
\end{tabular}
\end{table}

\renewcommand{\refname}{Supplementary References}

\end{document}